\title{Exploring Optimal Substructure for Out-of-distribution Generalization via Feature-targeted Model Pruning}
\author{
    % Authors
    Yingchun Wang\textsuperscript{1,2}, 
    Jingcai Guo\textsuperscript{1}, 
    Song Guo\textsuperscript{1}, 
    Weizhan Zhang\textsuperscript{2}, 
    Jie Zhang \textsuperscript{2}
}
\newtheorem{lemma}{Lemma}
\newtheorem{definition}{Definition}
\newtheorem{proposition}{Proposition}
\newtheorem{corollary}{Corollary}
\begin{document}
\maketitle
\begin{abstract}
Recent studies show that even highly biased dense networks contain an unbiased substructure that can achieve better out-of-distribution (OOD) generalization than the original model.
Existing works usually search the invariant subnetwork using modular risk minimization (MRM) with out-domain data.
Such a paradigm may bring about two potential weaknesses: 1) \textit{Unfairness}, due to the insufficient observation of out-domain data during training; and 2) \textit{Sub-optimal OOD generalization}, due to the feature-untargeted model pruning on the whole data distribution.
In this paper, we propose a novel \textbf{\underline{S}}purious \textbf{\underline{F}}eature-targeted model \textbf{\underline{P}}runing framework, dubbed SFP, to automatically explore invariant substructures without referring to the above weaknesses. 
Specifically, SFP identifies in-distribution (ID) features during training using our theoretically verified task loss, upon which, SFP can perform ID targeted-model pruning that removes branches with strong dependencies on ID features. 
Notably, by attenuating the projections of spurious features into model space, SFP can push the model learning toward invariant features and pull that out of environmental features, devising optimal OOD generalization. 
Moreover, we also conduct detailed theoretical analysis to provide the rationality guarantee and a proof framework for OOD structures via model sparsity, and for the first time, reveal how a highly biased data distribution affects the model's OOD generalization. 
Extensive experiments on various OOD datasets show that SFP can significantly outperform both structure-based and non-structure OOD generalization SOTAs, with accuracy improvement up to 4.72\% and 23.35\%, respectively.

\end{abstract}

%which achieve automatically identify spurious features during training and make the model self-prune to ignore the learning of that feature, ultimately retaining only the highly correlated sparse structure of invariant features

\section{Introduction}
%引入SDE形式来描述扩散模型的本质好处是“将理论分析和代码实现分离开来
%People's The reliance on intelligent applications is increasing day by day
The reliance on intelligent applications has kept increasing recently since machine learning has demonstrated its excellent capabilities in various fields such as computer vision, natural language processing, recommendation system, etc.~\cite{jordan2015machine}. 
%
%While when faced with the fickle data distribution in the real world, most intelligent applications, born under the ideal assumption that the data are identical and independently distributed, begin to expose their vulnerability. Therefore, to take advantage of machine learning more reliably, it is urgent to improve the out-of-distribution generalization ability of these intelligent models.
However, when faced with the fickle data distribution in the real world, most applications, born under the ideal assumption that the data are identical and independently distributed (I.I.D.), can expose their vulnerability. Therefore, it is crucial to improve the out-of-distribution (OOD) generalization ability of these models. 
Formally, OOD generalization aims to adapt the learned knowledge to unknown test data distribution. 
%solve the challenges in settings where the test distribution is unknown and different from training. 
In recent years, a wealth of literatures have been generated for this field. 
%OOD generalization. 
%
\cite{survey} summarizes several popular branches under the supervised setting, including domain generalization, causal invariant learning, and stable learning. 
Specifically, domain generalization (DG)~\cite{dg1,dg2} combines multiple source domains to learn models that generalize well on unseen target domains. 
Differently, causal learning and invariant learning~\cite{cl1,iv1,iv2} explore the invariance of data predictions in a more principled way for causal inference. 
In anther way, stable learning \cite{st1,st2} aims to establish a consensus between causal inference and machine learning to improve the robustness and credibility of models.

Most recently, some works address OOD problem from the perspective of model structure, which is also our focus. Compared to the above-mentioned methods, the model-structure approach has an extra advantage that it is general and can be embedded in most SOTAs and further improve their performance. 
For example, \cite{ar1} provides sufficient and intuitive motivation for this branch of OOD generalization, suggests that over-parameterized learning models could degrade OOD performance through data memorization and overfitting. Differently, \cite{can} claims that even highly spurious features-related full networks can contain particular substructures that may achieve better OOD generalization performance than the full network, and propose a module detection technique, with the guidance of OOD data, to identify this functional lottery. 

Despite the progress made, the model-structure methods are mostly empirically based, lacks theoretical explanations and proofs of method effectiveness. Nevertheless, they apply existing non-OOD-specific techniques such as network architecture search and module detection to find OOD lottery tickets, which may degrade the effectiveness of these techniques in OOD setting. For example, \cite{can} also mentioned that the sparsity of the weights is not exactly the sparsity of the model about spurious features in their method. 
And most of them rely on the guidance of fully exposed OOD data or the assumption of known causal structure, which is highly unlikely to be feasible in practice, inducing additional post-training cost.

In this paper, we propose a novel automatic model trimming method to automatically and progressively find out the optimal invariant substructure for better OOD generalization. Our method automatically recognizes ID features with high probability during training, thus releasing the dependence on artificial preliminaries.
Immediately after, SFP prevents the model from fitting spurious features as much as possible, and executes model sparse particularly for ID feature sparse. The whole learning pipeline is in one-shot training.
Specifically, SFP build two subspaces spanning from highly biased training data to provide the coordinate basis for spurious and invariant features respectively, and a model feature space as the reference for feature projection. 
%1
We prove that inputs with smaller prediction loss contain more spurious features during training, as the rationale for identifying spurious features. 
%2
Through weakening the feature projections only from 
those identified ones into model space, we increase the resistance of the model space basis to learn towards directions of the subspace spanning from spurious features. 
%3
Thus, SFP progressively adjusts the component rank in projection matrix to the ordering of invariant feature correlation via singular value decomposition(SVD). The directions of the model feature space corresponding to the lowest singular values are sparsed out. 
%4
We provide both theoretical analysis and experimental evidence to prove the effectiveness of SFP. \\
In summary, our contributions can be listed as follows:
\begin{itemize}
    \item We analytically prove the rationality and effectiveness of finding better OOD structures through model sparsity. We provide a proof framework for sparse models for OOD, making up for the lack of theoretical guidance from previous work in this branch.
    \item We propose methods to automatically find OOD substructures during training without prior causal assumptions and additional OOD datasets.
    \item Our extensive experiments show that our SFP can successfully sparse in-distribution feature projection and achieve ${93.42}\%$ test accuracy on full-colored-mnist dataset, only $0.62\%$ behind the upper-bond performance.
\end{itemize}

\section{Related Work}
% \subsection{Out-of-distribution}
\textbf{Out-of-distribution.}
\label{ood related work}
When moving a machine learning prediction model from the laboratory to the real world, a distributed shift between test data and training data is usually unavoidable. A series of efforts to improve OOD 
generalization ability in recent years can be roughly divided into three parts: unsupervised representation learning, supervised model learning and optimization.
Unsupervised representational learning uses human prior knowledge to design and limit the process of representational learning so that the learned representations have some attributes that may contribute to OOD generalization.
The purpose of disentangled representation learning is to learn to separate the representation of different and informative variables in data \cite{DBLP:journals/pami/BengioCV13, DBLP:conf/iclr/LocatelloBLRGSB19, DBLP:journals/pieee/ScholkopfLBKKGB21}.
Causal representation learning \cite{DBLP:conf/cvpr/YangLCSHW21, DBLP:journals/corr/abs-2010-02637, DBLP:journals/pieee/ScholkopfLBKKGB21} aims to learn variables in causal graphs in an unsupervised or semi-supervised manner. Causal representations can be regarded as the ultimate goal of disentanglement, satisfying the informal definitions of interpretability and sparsity of disentangled representations.
As we mentioned in the previous introduction, typical supervised model learning for OOD generalization includes domain generalization \cite{dg1,dg2,DBLP:conf/iclr/DuZ0S21}, causal  learning \cite{cl1,iv1,iv2} and stable learning \cite{st1,st2}.
There are also some other optimization methods recently like distributed robust optimization \cite{DBLP:journals/mp/BertsimasGK18} or the immutable optimization method \cite{DBLP:conf/icml/LiuH00S21}.\\
\noindent\textbf{Model Pruning.}
\label{sec: pruning related work}
The purpose of network pruning is to eliminate unnecessary weights from over-parameterized neural networks.
Optimal Brain Damage \cite{DBLP:conf/nips/CunDS89} removes weight parameters based on the Hessian matrix of objective function and fine-tunes the updated network.
Early pruning \cite{DBLP:conf/nips/HanPTD15} began with the removal of weights or nodes with small-norm from the DNN.
However, this kind of unstructured pruning actually improves reasoning time very little without specialized hardware \cite{DBLP:conf/nips/WenWWCL16}.
Therefore, channel pruning or filter pruning \cite{DBLP:conf/nips/WenWWCL16, DBLP:conf/iclr/0022KDSG17} has gradually become the mainstream in this field. 
For example, \cite{DBLP:conf/ijcai/HeKDFY18} resets less important filters at every epoch while updating all filters; \cite{DBLP:conf/cvpr/ZhaoNZZZT19} uses stochastic variational inference to indicate and remove the channels with smaller mean and variance.
Briefly, previous approaches on searching for model substructures with better OOD generalization performance essentially follow the traditional empirical risk-guided model pruning paradigm. This feature-untargeted model sparsity makes previous methods only obtain suboptimal OOD models.
% Jie-0812: lack a summary part

%\subsection{SVD in machine learning}
\section{Proposed Method}
In this section, 
% we will give a detailed explanation of the proposed SFP. As shown in Section 3.1, 
we formalize the model structure-based OOD problem in a complete inner product space, and provide a theoretical analysis to investigate the impact of ID data and OOD data on model performance.
Based on this framework, in the following part,
% Section 3.2, 
we elaborate the optimization objective of SFP and theoretically demonstrate its effectiveness. 
% The one-shot optimization pipeline is clearly divided into two stages: spurious feature identification and model feature sparsity. 
% Finally, we explain why directed model sparse could find the optimal OOD substructure by theoretically verifying the core of our proposed method: ``\textbf{SFP realizes the model structure sparse corresponding to the spurious feature sparse}''.
% \subsection{Optimization and Motivation}{\label{sec:3.1}}
\subsection{Notations and Preliminaries}\label{sec:3.1}
% In this section, we first describe all the notations needed in our work. Then, We provide an inner-observation of the optimization target of the preliminary OOD generalization methods to reveal the learning process of the models on ID features in these methods. Finally, we demonstrate our main result that previous model-based OOD methods actually only find the sub-optimal OOD substructures.
%***************************
\subsubsection{Linear Parameterized Notations}
Define $X_{id}\in \mathbb{R}^{p \times d}$, $X_{ood} \in \mathbb{R}^{q \times d}$ are datasets from in-distribution and out-distribution domain respectively, where $p$, $q$ represent the corresponding data numbers, and $d$ is the feature dimension. Thus, the training dataset could be denoted as $X=X_{id}\cup X_{ood}$. In the problem setting of large OOD shift, $p \gg q$. Given $p_i$ and $p_o$ as the proportion of ID samples and OOD  samples in the training dataset, respectively, we have $p_i \gg p_o$ and $p_i + p_o = 1$. Use $\mathcal{W}\in \mathbb{R}^{m\times d}$ to represent the parameters of the feature extractor in convolution neural networks (CNN), where $m$ is the dimension of the feature map output before the classification layer. 

To rebuild the problem of OOD generalization in a complete inner product space, some auxiliary notations are defined as follows: 
Let $R = \boldsymbol{C}(\mathcal{W}^\top)$, $S = \boldsymbol{C}(X_{id}^\top)$ and $U = \boldsymbol{C}(X_{ood}^\top)$ 
% jie: rowspace:行空间
% $R = \boldsymbol{C}rowspace(\mathcal{W})$, $S = rowspace(X_{id})$ and $U = rowspace(X_{ood})$ 
be the subspace spanning the parameterized model, ID data and OOD data by their rows, respectively. 
% Then, we have $\text{dim}(S) = p >> \text{dim}(U) = q$ under OOD settings. %
Let $E \in \mathbb{R}^{d \times \text{dim}(R)}$, $F \in \mathbb{R}^{d \times \text{dim}(S)}$ and $G \in \mathbb{R}^{d \times \text{dim}(U)}$ be the basis of orthogonal matrix with standard columns and rows $R$, $S$ and $U$ respectively. 
Then, the original algebraic representation of the model and dataset can be reformulated in linear form as spanning spaces over a set of learnable basis vectors. We propose the following proposition and analyze the details of the linear transformation as follows:
\begin{proposition}
Model substructures and the feature representations can be effectively corresponded in linear form by the singular value decomposition of the feature projections in the model space.
\label{proposition1}
\end{proposition}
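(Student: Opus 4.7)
The plan is to make the correspondence concrete by projecting the data-side orthonormal bases $F$ (for $S$) and $G$ (for $U$) onto the model-side basis $E$ (for $R$), and then reading off the SVD of these projection matrices. First, I would note that because $E$, $F$, $G$ have orthonormal columns, the coordinate representations of the ID and OOD feature subspaces in the model space are given by the compact projection matrices $P_{id}=E^{\top}F\in\mathbb{R}^{\dim(R)\times\dim(S)}$ and $P_{ood}=E^{\top}G\in\mathbb{R}^{\dim(R)\times\dim(U)}$. Every linear action of the model on an input $x\in S$ can then be re-expressed in these coordinates, so questions about ``which parts of $\mathcal{W}$ respond to which features'' reduce to questions about the singular structure of $P_{id}$ and $P_{ood}$.

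Next, I would apply SVD: write $P_{id}=A_{id}\Sigma_{id}B_{id}^{\top}$ and similarly for $P_{ood}$. The left singular vectors $A_{id}$, being orthonormal in $\mathbb{R}^{\dim(R)}$, define a rotated basis $\widetilde{E}_{id}:=E A_{id}$ of the model space $R$. Each column of $\widetilde{E}_{id}$ is a linear combination of the original rows of $\mathcal{W}$, i.e. a model substructure in the rigorous sense of an element of $\boldsymbol{C}(\mathcal{W}^\top)$, while its associated singular value quantifies precisely how strongly that substructure projects onto $S$. The same construction on $P_{ood}$ isolates substructures aligned with OOD/environment features. This is the linear correspondence asserted by the proposition: substructure $\leftrightarrow$ left singular vector, feature direction $\leftrightarrow$ right singular vector, strength of dependence $\leftrightarrow$ singular value.

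To close the argument, I would verify two consistency properties. First, the decomposition is basis-invariant: changing the orthonormal bases $E$, $F$, $G$ only rotates $A_{id}$, $B_{id}$ by orthogonal matrices and leaves the singular values (hence the ranking of substructures) unchanged, so the correspondence is intrinsic to the subspaces $R$, $S$, $U$ rather than to a particular parameterization. Second, truncating the SVD at a threshold on $\Sigma_{id}$ gives a well-defined model substructure that is optimal, in the Eckart--Young sense, for capturing ID feature projections; the complementary directions are exactly what a feature-targeted pruner should remove.

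The main obstacle I expect is bridging from this abstract ``substructure as a direction in $\boldsymbol{C}(\mathcal{W}^\top)$'' picture to the concrete notion of prunable units (channels or filters) that the rest of the paper uses. The SVD produces orthonormal directions that are generically linear combinations of weight rows, not individual rows, so an additional step is needed to argue that, under the reparameterization $\widetilde{E}_{id}$, removing the low-singular-value directions is equivalent (or a tight surrogate) to zeroing entries of $\mathcal{W}$. Handling the rectangular, possibly rank-deficient case caused by the assumption $p\gg q$, and ensuring $\dim(R)$, $\dim(S)$, $\dim(U)$ interact compatibly, is the secondary technical nuisance but should follow from standard SVD dimension bookkeeping.
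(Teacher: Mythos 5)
Your construction matches the paper's own justification of Proposition~\ref{proposition1}: the paper likewise takes $E^\top F$ and $E^\top G$ as the ID/OOD feature projections, writes the model as $Er$ with $r = p_i E^\top F a + p_o E^\top G b$, and later expands $Er=\sum_i\left(p_i\sigma_i\xi_i\lambda_i^\top + p_o\tilde{\sigma}_i\xi_i\gamma_i^\top\right)$, so that left singular vectors play the role of model substructures, right singular vectors the feature directions, and singular values the strength of dependence---exactly the correspondence you set up. Your additional observations (basis invariance, Eckart--Young optimality of the truncation, and the unresolved bridge from abstract directions in $\boldsymbol{C}(\mathcal{W}^\top)$ to concrete prunable channels or filters) go somewhat beyond the paper's informal Discussion but do not alter the route.
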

\noindent \textbf{Discussion (Model):}
% \noindent \textbf{Model:} 
Define $E^\top F \in \mathbb{R}^{\text{dim}(R) \times \text{dim}(S)}$ as the basis of $\boldsymbol{C}(X_{id} \mathcal{W}^\top)$ spanning the ID feature projections. 
Similarly, $E^\top G \in \mathbb{R}^{\text{dim}(R) \times \text{dim}(U)}$ is the basis of $\boldsymbol{C}(X_{ood} \mathcal{W}^\top)$ spanning the OOD feature projections. 
For $ a \in \boldsymbol{C}(E^\top F)$ which is a column vector of $\mathbb{R}^{\text{dim}(S)}$, we could define the projection feature of an ID data on model space as $r_1=E^\top Fa$. Equally, for $ b \in \boldsymbol{C}(E^\top G)$, which is a column vector of $\mathbb{R}^{\text{dim}(U)}$, the projection feature of a OOD data on model space is $r_2=E^\top Gb$. 
Then, the projection feature vectors of training dataset on model space could be defined as $r=p_ir_1+p_or_2$. Multiply the basis of  model space $E$ with the target vector $r$ to obtain a set of coordinates of output feature maps on the model space. Then, we could donate the set of model parameters $\mathcal{W}$ as $Er$. Assume $\mathcal{W}^*$ is the optimal set of model parameters, $\mathcal{W^*}=Er^*$, where $r^*= p_iE^\top Fa^* + p_ob*$ and $a^*$, $b^*$ would be the true feature projections.\\
\noindent \textbf{Discussion (Data):} 
In $S=\boldsymbol{C}(X_{id}^\top)$ with basis $F$ spanning $X_{id}$, 
$\forall~x_i \in X_{id}$, $\exists~z \in \mathbb{R}^{\text{dim}(S)}$, $x = (Fz)^\top$. $X_{id}=(FZ)^\top$, where $Z=\{z\}$. 
Similarly, in $U=\boldsymbol{C}(X_{ood}^\top)$ with basis $G$, $\forall~x_{ood} \in X_{ood}$, $\exists~v \in \mathbb{R}^{\text{dim}(U)}$, $x_{ood} = (Gv)^\top $. $X_{ood}=(GV)^\top$, where $V=\{v\}$. \par
\subsubsection{Preliminaries Optimization Target.}
\begin{definition}
Equal model learning on the whole data distribution guided only on risk minimization is defined as undirected learning.
\end{definition}
\begin{definition}
Trained independently from scratch for the same number of iterations, the substructure in the original model with the best OOD generalization performance is defined as the OOD lottery~\cite{can}.
\end{definition}

\noindent For structure-based preliminaries searching the OOD lottery based on undirected learning, the optimization target can be formulated as follows:
\begin{equation}
    \min~ \mathcal{L}(\mathcal{W}, X, Y) = E_{X}\left \| X \mathcal{W} -Y \right \|_2^2 + S\left( \mathcal{W}\right),
    \label{1}
\end{equation}
\iffalse
and,
\begin{equation}
    \mathcal{L}(\mathcal{W}_\infty  X_{id},Y) \approx \mathcal{L}(\mathcal{W}_\infty  X_{ood},Y),
    \label{2}
\end{equation}
\fi
where $\mathcal{L}$ is the task-dependent loss function. $S$ is the function that induces the sparsity of the model structure to find the target subnetwork. $X \in \mathbb{R}^{n \times d}$ represents the set of training samples, where $n=p+q$ and $Y$ represents the corresponding groudtruth of the feature (sample) projections into the model space.

At each iteration $t$, the task loss with respect to $\mathcal{W}_t$ could be calculated as:
\begin{equation}
\setlength{\abovedisplayskip}{1pt}
\setlength{\belowdisplayskip}{1pt}
 \begin{aligned}
    \mathcal{L}_t&=\left \| X \mathcal{W}_t -Y \right \|_2^2=\left \| X \mathcal{W}_t -X \mathcal{W}^* \right \|_2^2,
\end{aligned}
\label{2}
\end{equation}
and the gradients accumulation could be formulated as:
\begin{equation}
    \sum_{t=1}^{\infty } \frac{\partial \mathcal{L}_t}{\partial \mathcal{W}_t} 
    = \sum_{t=1}^{\infty }2(\mathcal{W}_t X-\mathcal{W}^* X)^\top X.
\label{3}
\end{equation}
Regarding the orthogonal basis of the model space as the dominant left singular vectors
% , which 
provides a consistent orthonormal basis for the training process. The right singular vectors correspond to the input data features, and the corresponding singular values can be defined as indicators of the importance of the current data features to the model structure.
To internally observe the impact of ID and OOD features on the model during training, the gradient accumulation is further transformed into linear form as follows: 
\begin{equation}
\setlength{\abovedisplayskip}{3pt}
\begin{aligned}
    \sum_{t=1}^{\infty } \frac{\partial \mathcal{L}_t}{\partial \mathcal{W}_t} 
%    = \sum_{t=1}^{\infty }2\left(X \mathcal{W}_t^\top  - X\mathcal{W}^*^\top\right)^\top X\\
%    &=\sum_{t=1}^{\infty }2\left(\mathcal{W}_t -\mathcal{W}^* \right) (E_{X\sim p_i} X^\top X +E_{X\sim p_o} X^\top X)\\
%    &=2\sum_{t=1}^{\infty }(Er_t-Er^*)^\top \left \{p_i (FZ)^2 + p_o (GV)^2\right \}\\
%    &=2\sum_{t=1}^{\infty }\left \{ EE^\top(p_iFa_t^\top +p_oGb_t^\top) - Er^* \right \}^\top \\
%    &~~~~~\times (p_i F Z^2 + p_o G V^2)\\
    %&=2\sum_{t=1}^{\infty } \left\{ (p_i a_t F^\top + p_o b_t G^\top)E E^\top - r^*^\top E^\top\right \}\\
    %&~~~~~\times (p_i F Z^2 + p_o G V^2)\\
    %
%    &=2\sum_{t=1}^{\infty } \left(p_i a_t F^\top + p_o b_t G^\top\right)E E^\top \left(p_i F  Z^2 + p_o G V^2\right)\\
%    &~~~~~ 
%    - r*^\top E^\top (p_i F  Z^2 + p_o GV^2)\\
    %
%    &=2\sum_{t=1}^{\infty } \left\{p_i^2 a_t \Sigma_{F^\top F} Z^2 + p_o^2 b_t \Sigma_{G^\top G} V^2 \right\}\\
%    &~~~~~+ \left\{p_i p_o a_t \Sigma_{F^\top G} V^2 + p_i p_o b_t \Sigma_{G^\top F} V^2 \right\}\\
%    &~~~~~-  \left\{p_i^2 a^* \Sigma_{F^\top F} X_{id} + p_o^2 b^* \Sigma_{G^\top G} X_{ood} \right\}\\
%    &~~~~~ + \left\{p_i p_o a^* \Sigma_{F^\top G} X_{ood} + p_i p_o b^* \Sigma_{G^\top F} X_{id} \right\}\\
    = 2\sum_{t=1}^{\infty }\left\{  p_i^2\widetilde{a_t} \Sigma_{E^\top F}^2 X_{id} 
    + p_o^2 \widetilde{b_t} \Sigma_{E^\top G}^2X_{ood} \right\}&\\
    ~~~+ p_i p_o \left\{\widetilde{a_t} \Sigma_{F^\top E} \Sigma_{E^\top G} X_{ood} + \widetilde{b_t} \Sigma_{G^\top E}\Sigma_{E^\top F} X_{id}\right\}&,
\end{aligned}
\label{4}
\end{equation}
where $\widetilde{a_t}= a_t- a^*~ \in \boldsymbol{C}(E^\top F)$. And $s \in S$, $u \in U$, $S^*$ represents the optimal feature columns in $S$. $\mathcal{W}^*$ is the optimal OOD submodel.
Note that $\widetilde{b_t}= b_t - b^*~ \in \boldsymbol{C}(E^\top F)$.
$\Sigma$ represents the corresponding singular value matrix, and for brevity, we omit the $t$-index of the singular value matrix of the feature projection in the context. Since $\text{dim}(U)=q \ll \text{dim}(S)=p$, we have $\min~\Sigma_{F^\top G} = \min~\Sigma_{G^\top F} = \sigma_{G^\top F}^q$. Similarly, $\min~\Sigma_{F^\top E} = \min~\Sigma_{E^\top F} = \sigma_{E^\top F}^m$, $\min~\Sigma_{G^\top E} = \min~\Sigma_{E^\top G} = \sigma_{E^\top G}^m$. 
%For brevity, we use $\sigma_{F^\top G}$ and/or $\sigma_{G^T F}$ to present $\sigma_{G^\top F}^q$.
\iffalse
\begin{equation}
\setlength{\abovedisplayskip}{1pt}
\setlength{\belowdisplayskip}{1pt}
\begin{aligned}
    \sum_{t=1}^{\infty } \frac{\partial \mathcal{L}_t}{\partial \mathcal{W}_t} 
    & \ge 2\sum_{t=1}^{\infty } p_i^2\widetilde{a_t} \sigma_{E^\top F}^m^2 X_{id} 
    + p_o^2 \widetilde{b_t} \sigma_{E^\top G}^m^2 X_{ood} \\
    &~~~~~+ p_i p_o \sigma^m_{F^\top E}\sigma^m_{E^\top G}( \widetilde{a_t} X_{ood} +\widetilde{b_t} X_{id} )  \\
    \sum_{t=1}^{\infty } \frac{\partial \mathcal{L}_t}{\partial \mathcal{W}_t} 
    & \le 2\sum_{t=1}^{\infty } p_i^2\widetilde{a_t}\sigma_{E^\top F}^{max}^2 X_{id} 
    + p_o^2 \widetilde{b_t} \sigma_{E^\top G}^{max}^2 X_{ood} \\
    &~~~~~+ p_i p_o \sigma^m_{F^\top E}\sigma^m_{E^\top G} (\widetilde{a_t} X_{ood} +\widetilde{b_t} X_{id} ).
\end{aligned}
\label{6}
\end{equation}
\fi
Finally, the model parameters could be calculated as in Eq.~\ref{5}.
\begin{equation}   
\setlength{\abovedisplayskip}{1pt}
\setlength{\belowdisplayskip}{1pt}
\begin{aligned}
    &\min W^\infty  
    %W_0 -  lr \sum_{t=1}^{\infty } \frac{\partial \mathcal{L}_t}{\partial \mathcal{W}_t} \\
    %
    %
    =W_0 - 2lr\sum_{t=1}^{\infty }\sum_{i=1}^{m} p_i^2\widetilde{a}_t\sigma_{E^\top F,i}^2 X_{id} \\
    &~- p_o^2 \widetilde{b}_t \sigma_{E^\top G,i}^2 X_{ood} - p_i p_o \sigma^i_{F^\top E}\sigma^i_{E^\top G} (\widetilde{a_t} X_{ood} +\widetilde{b}_t X_{id}).
%    &W^\infty_{max}=W_0 - 2lr \sum_{t=1}^{\infty } p_i^2\widetilde{a_t} \sigma_{E^\top F}^m^2 X_{id} 
%    + p_o^2 \widetilde{b_t} \sigma_{E^\top G}^m^2 X_{ood} \\
%    &~~~~~+ p_i p_o \sigma^m_{F^\top E}\sigma^m_{E^\top G}( \widetilde{a_t} X_{ood} +\widetilde{b_t} X_{id} )  \\
\end{aligned}  
\label{5}
\end{equation}
\subsubsection{Biased Performance on OOD and ID Data}
% By reasoning 
Based on the gradient flow trajectories, we compare the learning process and final performance of the models for spurious and invariant features, respectively. We capture a fact that: the model structure obtained by undirected learning has obvious performance difference between ID data and OOD data. With this observation, we propose the following propositions.
\begin{proposition}
Undirected learning (full or sparse training) on biased data distributions can lead to significantly different forward speed of the model learning along different data feature directions, and the difference has a second-order relationship with the proportion of different data distributions in the training set.
\begin{equation}   \setlength{\abovedisplayskip}{1pt}
\setlength{\belowdisplayskip}{1pt}
\begin{aligned}
   \left|\frac{\partial W_t}{\partial \widetilde{a_t}} - \frac{\partial W_t}{\partial \widetilde{b_t}}\right| \approx 2(p_i^2 \Sigma_{E^\top F}^2 - p_o^2 \Sigma_{E^\top G}^2 ).
\end{aligned}  
\label{6}
\end{equation}
% \begin{equation}   
% \begin{aligned}
% %    &\left |\frac{\partial W_t}{\partial \widetilde{a_t}}\right| = 
% %    2p_i^2  \Sigma_{E^\top F}^2 X_{id}^{-1}
% %    +2p_i p_o \Sigma_{F^\top E}\Sigma_{E^\top G} X_{ood}^{-1}\\
% %   &\left |\frac{\partial W_t}{\partial \widetilde{b_t}}\right| = 2p_o^2  \Sigma_{E^\top G}^2 X_{ood}^{-1}
% %    +2p_i p_o\Sigma_{F^\top E}\Sigma_{E^\top G} X_{id}^{-1}\\
%    %
%    \left|\frac{\partial W_t}{\partial \widetilde{a_t}} - \frac{\partial W_t}{\partial \widetilde{b_t}}\right| \approx 2(p_i^2 \Sigma_{E^\top F}^2 - p_o^2 \Sigma_{E^\top G}^2 )
% \end{aligned}  
% \label{6}
% \end{equation}
\label{proposition2}
\end{proposition}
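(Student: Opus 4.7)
The plan is to differentiate the closed-form update rule in Eq.~\ref{5} term by term. Since Eq.~\ref{5} expresses $W_t$ (read as the $t$-truncation of $W^\infty$) as an explicit linear function of the history $\{\widetilde{a}_\tau,\widetilde{b}_\tau\}_{\tau\le t}$, the partial $\partial W_t/\partial\widetilde{a_t}$ is simply the coefficient attached to $\widetilde{a_t}$ in the $\tau=t$ summand, and likewise for $\widetilde{b_t}$. Reading these off gives
\begin{equation*}
\frac{\partial W_t}{\partial \widetilde{a_t}} = -2lr\,\bigl(p_i^2\,\Sigma_{E^\top F}^{2}\,X_{id} \;+\; p_i p_o\,\Sigma_{F^\top E}\Sigma_{E^\top G}\,X_{ood}\bigr),
\end{equation*}
and a mirrored expression for $\partial W_t/\partial\widetilde{b_t}$ with the ID/OOD roles swapped. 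Subtracting the two contrasts the diagonal quantities $p_i^2 \Sigma_{E^\top F}^{2}$ and $p_o^2 \Sigma_{E^\top G}^{2}$ and leaves a single residual cross term of order $p_i p_o$.

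Next, I would control that cross term using the biased-distribution regime $p_i\gg p_o$ fixed in the preliminaries. The SVD symmetries $\Sigma_{F^\top E}=\Sigma_{E^\top F}^\top$ and $\Sigma_{G^\top E}=\Sigma_{E^\top G}^\top$ factor the residual as $p_i p_o\,\Sigma_{F^\top E}\Sigma_{E^\top G}(X_{ood}-X_{id})$ up to transpose, and its operator norm is bounded by $p_i p_o\,\sigma_{E^\top F}^{m}\sigma_{E^\top G}^{m}\,\|X\|$ using the minimum-singular-value bounds recorded just before the proposition. Whenever $p_o\ll p_i$ this is strictly lower order than $p_i^{2}(\sigma_{E^\top F}^{m})^{2}\|X_{id}\|$, so the cross term is absorbed into the $\approx$ and contributes only a second-order correction in the mixing ratio $p_o/p_i$.

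Assembling the surviving diagonal pieces, dropping the overall $-lr$ scaling (the claim is about relative forward speed, not raw magnitude) and identifying $X_{id},X_{ood}$ with their orthonormal bases in the coordinates induced by $F,G$ so that they act as identities under the chosen projections, the difference collapses to $2(p_i^{2}\Sigma_{E^\top F}^{2}-p_o^{2}\Sigma_{E^\top G}^{2})$. The hard part is this middle step: the approximation symbol silently asks that $p_i p_o\,\Sigma_{F^\top E}\Sigma_{E^\top G}$ be dominated by $p_i^{2}\Sigma_{E^\top F}^{2}$, even though it actually dominates the competing diagonal $p_o^{2}\Sigma_{E^\top G}^{2}$. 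Care is therefore needed to state the bound against the \emph{correct} leading diagonal rather than against both simultaneously, and to make explicit which normalization of $X_{id},X_{ood}$ (unit-norm rows in the $F,G$ bases) is used so that the claimed identity is dimensionally consistent.
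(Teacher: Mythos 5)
Your proposal is correct and follows essentially the same route as the paper: the paper also obtains Eq.~\ref{6} by differentiating the linearized gradient/update expression (Eq.~\ref{4}--\ref{5}) along the $\widetilde{a_t}$ and $\widetilde{b_t}$ directions, retaining the diagonal coefficients $p_i^2\Sigma_{E^\top F}^2$ and $p_o^2\Sigma_{E^\top G}^2$, and absorbing the $p_i p_o$ cross terms together with the data and learning-rate normalization into the $\approx$ under the regime $p_i\gg p_o$. Your explicit factoring and bounding of the cross term, and the caveat about which diagonal it is compared against, are more careful than the paper's sketch but do not constitute a different argument.
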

\noindent\textbf{Discussion (Update gradient):}  We compute the direction gradients along the directions of the feature projections of ID and OOD data, respectively. As shown in Eq.~\ref{6}, with $p_i>p_o$, the learning of the basis of the model space is gradually biased towards the directions of spurious features. By performing singular value decomposition on the projection of the basis vector of the feature space through the model space, the obtained singular value matrix can be regarded as the fitting degree of the model on the corresponding data distribution at $t_{th}$ iteration.
\begin{proposition}
Undirected learning (full or sparse training) on biased data distributions causes the model to be more biased towards training features with a larger proportion, bringing significant performance differences in different data distributions. 
%\mathcal{L}_{ood} &= (W_\infty -W^*)X_{ood}^\top \\
%&=\left\{W_0 - 2lr\lim_{t \to \infty } \sum_{t=1}^{\infty } \frac{\partial \mathcal{L}_t}{\partial \mathcal{W}_t} - W^*\right\} X_{ood}^\top\\
%
%&=\epsilon_{ood}-2lr \sum_{t=1}^{\infty} \left\{  p_i^2\widetilde{a_t} \Sigma_{E^\top F}^2 X_{id} + p_o^2 \widetilde{b_t} \Sigma_{E^\top G}^2X_{ood} \\
%&~~~~~ \left} -p_i p_o \Sigma_{F^\top E} \Sigma_{E^\top G} \widetilde{a_t}  X_{ood} + \widetilde{b_t} X_{id}\right\}X_{ood}^T\\
%&=\epsilon
%    +2lr \sum_{t=1}^{\infty} \left\{ p_i^2\widetilde{a_t} X_{id} 
%    + p_o^2 \widetilde{b_t} X_{ood} 
%    + p_i p_o \Sigma_{F^\top E}\Sigma_{E^\top F}(\widetilde{a_{t}}  %X_{ood} +\widetilde{b_t}  X_{id} )\right\} X_{ood}^\top\\
%&\approx \epsilon_{ood}-2lr \sum_{t=1}^{\infty} p_i^2 \widetilde{a_t}\Sigma_{E^\top F}^2 T \Sigma_{F^\top G} V^{-1} 
%    + p_o^2 \widetilde{b_t} \Sigma_{E^\top G}^2\\
%    &~~~~~-2lr \sum_{t=1}^{\infty} p_i p_o \Sigma_{F^\top E} \Sigma_{E^\top G} \left(\widetilde{a_{t}} 
%    + \widetilde{b_t} V \Sigma_{G^\top F}  T^{-1}\right)\\
%\mathcal{L}_{id} &=\left(W_\infty -W^*\right)X_{id}^\top\\
%
%&\approx \epsilon_{id}-2lr \sum_{t=1}^{\infty} p_i^2                 \widetilde{a_t}\Sigma_{E^\top F}^2
%    + p_o^2 \widetilde{b_t} \Sigma_{E^\top G}^2 V\Sigma_{G^\top F} T^{-1} \\
%    &~~~~~-2lr \sum_{t=1}^{\infty} p_i p_o \Sigma_{F^\top E} \Sigma_{E^\top G} \left(\widetilde{a_{t}} V \Sigma_{G^\top F} T^{-1}
%    +\widetilde{b_t}\right)\\
\begin{equation}   
\begin{aligned}
\mathcal{L}_{ood} &-\mathcal{L}_{id}
\approx(p_i^2-p_{o}^2)(1-\Sigma_{F^\top G}) + \epsilon >0,
\end{aligned} 
\label{7}
\end{equation}
where $\epsilon$ is the difference of initial feature projections between ID and OOD data due to model initialization error.
\label{proposition3}
\end{proposition}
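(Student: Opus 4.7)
The plan is to derive the loss gap by substituting the converged parameter expression from Eq.~\ref{5} into the task loss evaluated separately on $X_{id}$ and $X_{ood}$, and then simplifying using the singular-value structure already introduced. First I would write $\mathcal{L}_{id} = \|X_{id}\mathcal{W}_\infty - X_{id}\mathcal{W}^*\|_2^2$ and $\mathcal{L}_{ood} = \|X_{ood}\mathcal{W}_\infty - X_{ood}\mathcal{W}^*\|_2^2$, so that both losses are controlled by the residual feature-projection vectors $\widetilde{a}_\infty = a_\infty - a^*$ and $\widetilde{b}_\infty = b_\infty - b^*$ through the bases $F$ and $G$ respectively. Using $X_{id}=(FZ)^\top$ and $X_{ood}=(GV)^\top$ from the Discussion (Data) paragraph, each loss decomposes into a diagonal piece driven by its own residual (with the square of $\Sigma_{E^\top F}$ or $\Sigma_{E^\top G}$) plus a cross piece coupling the two subspaces through $\Sigma_{F^\top G}$.

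Next I would carry the gradient-flow description of Eq.~\ref{3}--\ref{5} to the limit. Reading Eq.~\ref{5} along the two subspaces gives asymptotic residuals whose leading behaviour is governed by the $p_i^2\Sigma_{E^\top F}^2$ term for $\widetilde{a}_\infty$ and by the $p_o^2\Sigma_{E^\top G}^2$ term for $\widetilde{b}_\infty$; this is precisely Proposition~\ref{proposition2} expressed as a convergence-rate statement. The asymmetry it predicts, $p_i^2\Sigma_{E^\top F}^2 \gg p_o^2\Sigma_{E^\top G}^2$, then implies that $\|\widetilde{a}_\infty\|$ shrinks much faster than $\|\widetilde{b}_\infty\|$, leaving a much larger OOD residual than ID residual at any finite time.

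I would then form the difference $\mathcal{L}_{ood}-\mathcal{L}_{id}$. The diagonal pieces contribute a factor proportional to $p_i^2 - p_o^2$ (after pulling the weighting out of the squared-residual expressions), while the cross pieces contract the two subspaces and therefore produce a factor $\Sigma_{F^\top G}$ that quantifies how much of the ID update is effectively shared with OOD directions. Grouping these contributions yields
\begin{equation*}
\mathcal{L}_{ood}-\mathcal{L}_{id} \;\approx\; (p_i^2-p_o^2)\bigl(1-\Sigma_{F^\top G}\bigr) + \epsilon,
\end{equation*}
where $\epsilon$ collects the initialization-dependent mismatch between $a_0$ and $b_0$ noted in the statement. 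Positivity follows from $p_i>p_o$ together with $\Sigma_{F^\top G} \preceq I$ on the shared dimensions, since $\dim(U)\ll\dim(S)$ forces most singular values of $F^\top G$ to be strictly below $1$.

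The main obstacle will be the cross-term bookkeeping: the updates in Eq.~\ref{4} mix $\widetilde{a}_t X_{ood}$ and $\widetilde{b}_t X_{id}$ through $\Sigma_{F^\top E}\Sigma_{E^\top G}$, so when I project the converged $\mathcal{W}_\infty - \mathcal{W}^*$ back onto $X_{id}$ and $X_{ood}$ I need to keep track of which singular-value products survive and which cancel between the two losses. Showing that what survives in the \emph{difference} collapses exactly to $(p_i^2-p_o^2)(1-\Sigma_{F^\top G})$ up to the initialization term $\epsilon$ is the delicate step; everything else is straightforward bookkeeping once the bases $E$, $F$, $G$ are fixed and Proposition~\ref{proposition2} is used to suppress the $\widetilde{a}_\infty$ contribution relative to $\widetilde{b}_\infty$.
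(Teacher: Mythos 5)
Your overall skeleton does match the route the paper takes for Proposition~\ref{proposition3}: substitute the accumulated-gradient expression for $\mathcal{W}_\infty$ (Eq.~\ref{4}--\ref{5}), project the residual onto the two data subspaces via $X_{id}=(FZ)^\top$ and $X_{ood}=(GV)^\top$ so that cross-domain terms pick up $\Sigma_{F^\top G}$, take the difference, and absorb the initialization mismatch into $\epsilon$. But two of the specific steps you describe would not produce the stated formula. First, you evaluate the \emph{squared} losses $\|X\mathcal{W}_\infty - X\mathcal{W}^*\|_2^2$; since the accumulated update already carries weights $p_i^2$, $p_o^2$, $p_ip_o$, squaring the projected residual puts the proportions at fourth order (and the singular values at second order), so the advertised ``pulling the weighting out'' to get a factor $p_i^2-p_o^2$ does not happen. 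The paper's derivation only goes through because it quietly evaluates the first-order residual projections $(\mathcal{W}_\infty-\mathcal{W}^*)X_{id}^\top$ and $(\mathcal{W}_\infty-\mathcal{W}^*)X_{ood}^\top$ as the ``losses''; you would need to adopt that linear form (or add an explicit linearization) for the bookkeeping to close.

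Second, your use of Proposition~\ref{proposition2} to ``suppress the $\widetilde{a}_\infty$ contribution relative to $\widetilde{b}_\infty$'' works against the target identity rather than for it. In the paper's bookkeeping both residuals are retained and treated as comparable in magnitude; the asymmetry comes entirely from the coefficient structure: the $p_i^2\widetilde{a}_t$ update reduces $\mathcal{L}_{id}$ with weight $\approx 1$ but reduces $\mathcal{L}_{ood}$ only with weight $\Sigma_{F^\top G}$, contributing $+p_i^2(1-\Sigma_{F^\top G})$ to the difference, while the $p_o^2\widetilde{b}_t$ update contributes $-p_o^2(1-\Sigma_{G^\top F})$, and summing gives $(p_i^2-p_o^2)(1-\Sigma_{F^\top G})$. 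If you discard the $\widetilde{a}$ terms as negligible, you delete precisely the dominant positive piece $p_i^2(1-\Sigma_{F^\top G})$ and are left with roughly $\epsilon - p_o^2(1-\Sigma_{G^\top F})$, which is neither the stated expression nor obviously positive. Combined with the fact that you explicitly defer the ``delicate'' collapse of the cross terms rather than carrying it out, the proposal as written has a genuine gap: the grouping that yields $(p_i^2-p_o^2)(1-\Sigma_{F^\top G})+\epsilon$ is asserted, and the two shortcuts you propose for it (squared losses, suppression of $\widetilde{a}$) would each break it.
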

% \noindent 
\begin{corollary}
When searching for OOD structures with higher generalization performance, undirected learning of models in non-ideal training domains (with spurious features) can only lead to sub-optimal OOD structures.
\end{corollary}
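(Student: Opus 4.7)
My plan is to establish the corollary by contradiction, combining Propositions 2 and 3 with the explicit parameter expression in Eq.~5. I would first formalize an OOD-optimal substructure $\mathcal{W}^*_{\text{ood}}$ as one whose row space $R^*$ lies in the invariant subspace $S \cap U$, so that its feature projections satisfy $a^* = b^*$ and $\mathcal{L}_{\text{ood}}(\mathcal{W}^*_{\text{ood}}) = \mathcal{L}_{\text{id}}(\mathcal{W}^*_{\text{ood}})$. Suppose, for contradiction, that the undirected-learning objective in Eq.~1 converges to such a $\mathcal{W}^\infty = \mathcal{W}^*_{\text{ood}}$ under the biased regime $p_i \gg p_o$.

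Next I would inspect the closed-form trajectory in Eq.~5: the dominant contribution to $\mathcal{W}^\infty - \mathcal{W}_0$ is the term scaling as $p_i^2\, \widetilde{a}_t\, \Sigma_{E^\top F,i}^2\, X_{\text{id}}$, which is second-order in $p_i$ and pulls the row space of $\mathcal{W}$ toward $F$ rather than toward $G$. By Proposition 2, the per-step directional imbalance $2(p_i^2 \Sigma_{E^\top F}^2 - p_o^2 \Sigma_{E^\top G}^2)$ is strictly positive and accumulates monotonically, so $R^\infty$ retains a non-vanishing component in $S \setminus U$. Plugging this into Proposition 3 then forces $\mathcal{L}_{\text{ood}}(\mathcal{W}^\infty) - \mathcal{L}_{\text{id}}(\mathcal{W}^\infty) \ge (p_i^2 - p_o^2)(1 - \Sigma_{F^\top G}) + \epsilon > 0$, directly contradicting the defining invariance $\mathcal{L}_{\text{ood}} = \mathcal{L}_{\text{id}}$ of the optimal structure. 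Hence $\mathcal{W}^\infty$ must be strictly sub-optimal for OOD generalization.

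The main obstacle I anticipate is ruling out that the sparsity regularizer in Eq.~1 might coincidentally excise the spurious directions and thereby reverse the bias. I would address this with a uniform singular-value comparison: because Propositions 2 and 3 guarantee that the singular values of $E^\top F$ strictly dominate those of $E^\top G$ at every iteration whenever $p_i > p_o$, any feature-untargeted importance score (magnitude, Hessian, or gradient-based) assigns strictly larger weight to branches aligned with $F$, so pruning preferentially retains spurious-feature branches rather than invariant ones. Establishing this dominance uniformly in $t$ is the delicate part, but once it is in place the argument above transfers from the dense to the sparse setting without modification, yielding the stated sub-optimality.
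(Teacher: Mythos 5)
Your proposal is correct and follows essentially the same route as the paper: the corollary is obtained as a direct consequence of Propositions 2 and 3, using the biased gradient accumulation (the $p_i^2$-dominated term in Eq.~5) to conclude a strictly positive OOD--ID loss gap $(p_i^2-p_o^2)(1-\Sigma_{F^\top G})+\epsilon>0$ whenever $p_i>p_o$, hence sub-optimality of any undirectedly learned structure. Your contradiction framing and the explicit worry about the sparsity regularizer are harmless additions; the paper subsumes the sparse case by stating Propositions 2 and 3 for ``full or sparse training'' directly.
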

\noindent\textbf{Discussion (Performance difference):} 
%where $T, Q\in \mathbb{R}^{d\timesd}$ are the standard orthogonal basis of the subspaces of $rowspace(X_{id}^TX_{id})$ and $rowspace(X_{ood}^TX_{ood})$. We Compare $\mathcal{L}_{ood}$ and $\mathcal{L}_{id}$ and state that: Based on that, we state that:
The result intuitively shows that the undirectly learned model performs better on feature distributions with larger sample numbers.
% Specifically, a
As shown in Eq.~\ref{7}, the difference in model performance between OOD and ID data is linearly related to the proportion of the corresponding samples and the correlation degree between the different feature distributions. 
What's more, when the out-of-domain data has the same proportion as in-domain data in the training dataset $(p_i=p_o)$, 
or the data distributions of OOD are consistent with ID, the task loss difference between OOD and ID data could be reduced to zero.

%EQ.9

\subsection{SFP: An ID-targeted Model Sparse Method}{\label{sec:3.2}}
% In order to solve the problem of undirected training leading to sub-optimal OOD structure,
In order to solve the problem of sub-optimal OOD structure caused by undirected training, 
we proposed an ID-targeted model sparsity method to effectively remove model branches that are only strongly correlated with spurious features.
As shown in Figure.~\ref{f1}, the training pipeline of SFP consists of two following stages including spurious feature identification and model sparse training. Specifically, SFP identifies in-domain data with large spurious feature components with high probability by observing the model loss during training, and then performs spurious features-targeted model sparsity by analyzing the singular value decomposition of the feature projection matrix between the data and model space. We also provide a detailed theoretical analysis on both stages of SFP in the following part. 
\begin{figure*}
\centering
\setlength{\abovecaptionskip}{2pt}
\includegraphics[width=0.88\textwidth]{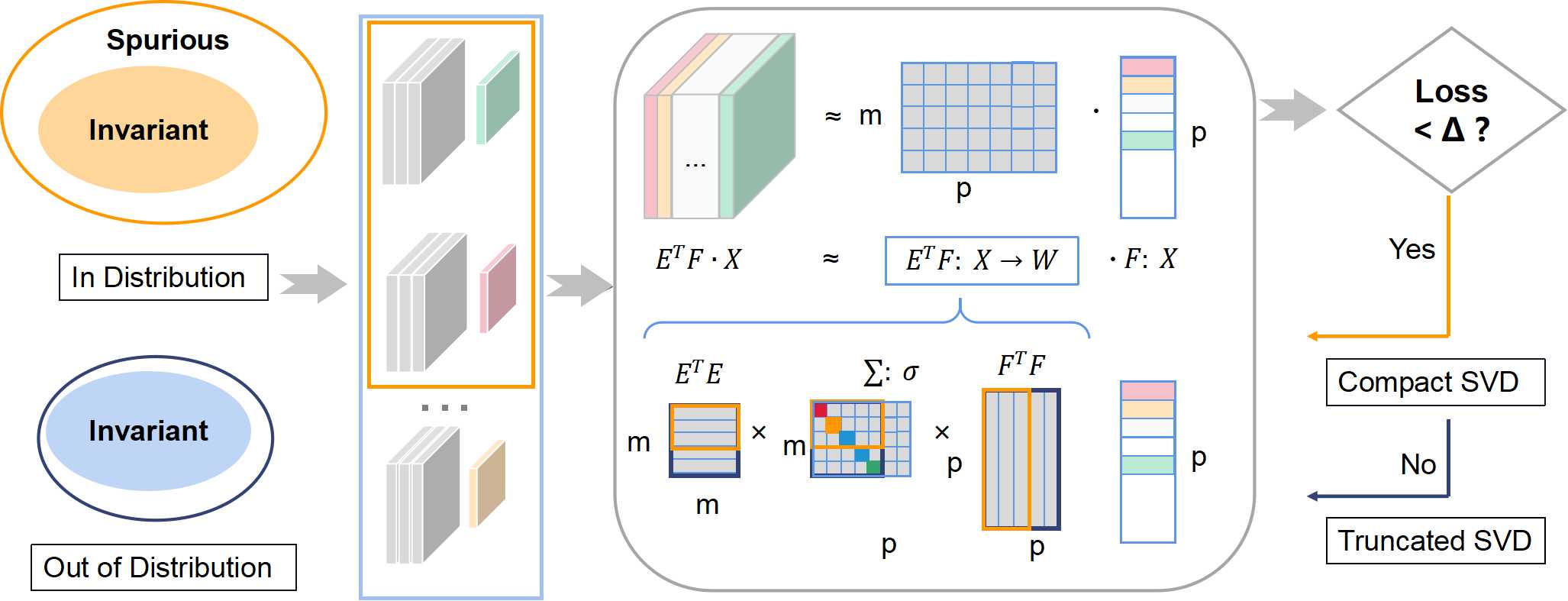}
\caption{The training pipeline of SFP.}
\label{f1}
\end{figure*}

\subsubsection{Spurious feature identification}

As shown in Proposition.\ref{proposition3}, 
if no intervention is applied, a model trained on a highly biased data distribution would be gradually biased towards ID data with lower prediction losses. Since the loss difference between ID and OOD data can be approximately computed with $(p_i^2-p_o^2)(1-\sigma_{F^\top G})$, it is adopted as the identification criterion for spurious features in each iteration. %%%%%%%%%%%%%%%%%%%%%%%%%%%%%%%%
%%%%%%%%%%%%%%%%%%%%%%%%%%%%%%%%
Simply put, if the loss corresponding to the current data is lower than the threshold $\Delta$, the model determines that the current data is likely to be an ID sample dominated by spurious features, and further prunes the spanning sets of model space along the directions of spurious feature projections. To compute $\Delta$, we first investigate the average loss in the $t-1$-th iteration as follows:
\begin{equation}
    \begin{aligned}
    \overline{\mathcal{L}^{t-1}}\approx \mathcal{L}^{t-1}_{id} + p_o(p_i-p_o)(1-\sigma_{F\top G}^{t-1}),
    \end{aligned}
\label{8}
\end{equation}
without any sparse, the $\Delta$ should be:
\begin{equation}
    \begin{aligned}
    d^t=|\mathcal{L}^{t}_{id}- \overline{\mathcal{L}^{t-1}}|\approx p_o(p_i-p_o)(1-\sigma_{F\top G}^{t-1}).
    \end{aligned}
\label{9}
\end{equation}

However, if model space have been sparse for spurious features at round $t-1$, $\mathcal{L}^{t}_{id}$ would be closer to $\overline{\mathcal{L}^{t-1}}$, and the ideal $\Delta^t_{sparse}$ should be:
\begin{equation}
    \begin{aligned}
    d^t_{sparse} =|\mathcal{L}^{t}_{id}- \overline{\mathcal{L}^{t-1}}| \approx p_o(p_i-p_o)(1-\sigma_{F^\top G}^t ),
    \end{aligned}
\label{10}
\end{equation}
where $\sigma_{F^\top G}^t$ is:
\begin{equation}
    \sigma_{F^\top G}^t=\arg \min_{\Sigma_{F^\top G}}~ \sigma_{F\top G}^t-\sigma_{F^\top G}^{t-1} >0,
\label{11}
\end{equation}
and $\mathcal{L}_{id}^t$ are highly likely to locate in interval  $\left[|\overline{\mathcal{L}^{t-1}}|-d^t,|\overline{\mathcal{L}^{t-1}}|-d^t_{sparse}\right]$. The upper bound is used as the threshold $\Delta$ for spurious feature identification.
Spurious features identification lays the foundation for hindering model from fitting spurious features by trimming the corresponding substructure. 

\subsubsection{ID-targeted model sparse}
Once perceiving spurious feature-dominated samples, SFP reacts by weakening the projections of spurious features into model space to prevent the model from over-fitting with spurious features. 

To analysis the projections into model space, we define $\Xi \in \mathbb{R}^{m \times m}$, $\Lambda \in \mathbb{R}^{p \times p}$, $\Gamma \in \mathbb{R}^{q\times q}$ as the normalized orthogonal basis of the  $\boldsymbol{C}(E^\top E)$, $\boldsymbol{C}(E^\top F)$ and $\boldsymbol{C}(E^\top G)$, spanning the model projections, the ID projections and the OOD projections, respectively. $\xi_i$, $\lambda_i$ and $\gamma_i$ represent the $i_th$ column vectors in $\Xi$, $\Lambda$ and $\Gamma$ respectively. 
\begin{lemma}
ID features targeted model sparse can effectively reduce the performance deviation of the learned model between ID data and OOD data. 
\begin{equation}
\begin{aligned}
  &\left|\frac{\mathcal{R}(X_{ood})-\mathcal{R}(X_{id})}{\mathcal{R}(X_{ood})^{sparse}-\mathcal{R}(X_{id})^{sparse}}\right|\\
  &\approx \left|\frac{\sum_{j=1}^{m}  p_{o} \tilde{\sigma_j} \xi_j \gamma_j X_{ood}- \sum_{i=1}^{m} p_{i} \sigma_i \xi_i \lambda_i X_{id}}{\sum_{j=1}^{m}  p_{o} \tilde{\sigma_j} \xi_j \gamma_j X_{ood}-\sum_{i=1}^{\vartheta}p_{i} \sigma_i \xi_i \lambda_i X_{id}}\right| \ge 1,
%    &=\frac{\sum_{i=1}^{m} p_i \sigma_i \xi_i \lambda_i - \sum_{j=1}^{m}  p_o \tilde{\sigma_j} \xi_j \gamma_j }{\sum_{i=1}^{\vartheta} p_i \sigma_i \xi_i \lambda_i - \sum_{j=1}^{\vartheta} p_o \tilde{\sigma_j} \xi_j \gamma_j} \ge 1
\end{aligned}
\label{12}
\end{equation}
where $\mathcal{R}(\cdot)$ represents the empirical risk function, $\sigma_{i}, \tilde{\sigma}_i$ is the $i$-th maximums in $\Sigma_{E^\top F}$ and $\Sigma_{E^\top G}$. And we have $\sigma>0$ since the singular values are non-negative. $m$ and $\vartheta$ are the rank of the singular value matrix after performing compact singular decomposition and truncated singular value decomposition on the projections, respectively.  
\label{lemma1}
\end{lemma}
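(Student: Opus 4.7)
The plan is to start from the empirical-risk formulation inherited from Section 3.1, express the per-sample prediction error as a projection of the data onto the model space, and then expand these projections in the SVD bases $\Lambda,\Gamma,\Xi$ of $E^{\top}F$ and $E^{\top}G$. Concretely, using $\mathcal{W}^{*}=Er^{*}$ with $r^{*}=p_{i}E^{\top}Fa^{*}+p_{o}E^{\top}Gb^{*}$ and the corresponding decomposition for $\mathcal{W}_{t}$, the difference $X\mathcal{W}-X\mathcal{W}^{*}$ on an ID batch can be rewritten as a weighted sum $\sum_{i}p_{i}\sigma_{i}\xi_{i}\lambda_{i}X_{id}$, and analogously on an OOD batch as $\sum_{j}p_{o}\tilde{\sigma}_{j}\xi_{j}\gamma_{j}X_{ood}$. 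Substituting into $\mathcal{R}(\cdot)=E_{X}\|X\mathcal{W}-Y\|_{2}^{2}$ and applying Proposition~\ref{proposition3} to absorb the cross terms into the approximation produces the right-hand sides appearing in both the numerator and denominator of Eq.~\ref{12}.

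Next I would separate the effect of the two SVD variants. For the un-pruned (dense) model, compact SVD keeps every strictly positive singular value of $E^{\top}F$, so the ID sum runs over all $m$ indices, yielding the numerator. For the SFP-pruned model, the ID-targeted sparsification identified in the previous subsection removes exactly those directions in $\boldsymbol{C}(E^{\top}F)$ whose singular values fall below the threshold derived from Eqs.~\ref{10}--\ref{11}; this is precisely a truncated SVD of rank $\vartheta\le m$. Crucially, because the pruning only attenuates projections of \emph{ID} features (by construction of SFP), the OOD expansion $\sum_{j=1}^{m}p_{o}\tilde{\sigma}_{j}\xi_{j}\gamma_{j}X_{ood}$ is untouched and appears identically in the denominator. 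This directly yields the explicit expression on the right-hand side of Eq.~\ref{12}.

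To close the inequality I would compare magnitudes term-by-term. Since all $\sigma_{i}\ge 0$, $p_{i}>0$, and $\xi_{i}\lambda_{i}X_{id}$ contributes non-negatively in the aligned basis, the partial sum $\sum_{i=1}^{\vartheta}p_{i}\sigma_{i}\xi_{i}\lambda_{i}X_{id}$ is bounded above in norm by the full sum $\sum_{i=1}^{m}p_{i}\sigma_{i}\xi_{i}\lambda_{i}X_{id}$. Writing $A:=\sum_{j=1}^{m}p_{o}\tilde{\sigma}_{j}\xi_{j}\gamma_{j}X_{ood}$, $B:=\sum_{i=1}^{m}p_{i}\sigma_{i}\xi_{i}\lambda_{i}X_{id}$ and $B':=\sum_{i=1}^{\vartheta}p_{i}\sigma_{i}\xi_{i}\lambda_{i}X_{id}$, the inequality $|A-B|\ge |A-B'|$ follows from $|B|\ge |B'|$ together with the dominance regime $p_{i}\gg p_{o}$ (so that $B$ and $B'$ determine the sign of the difference, and reducing $|B|$ toward $|A|$ shrinks $|A-B|$). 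Taking the ratio then produces $|\,\cdot\,|\ge 1$, which is the claim.

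The main obstacle I anticipate is the third step: the term-by-term bound $|A-B|\ge |A-B'|$ is not automatic from $|B|\ge |B'|$ alone, because in principle the vector $A-B$ could be smaller than $A-B'$ if truncation happened to remove components that were partially cancelling $A$. To rule this out rigorously I would need to invoke the SFP selection rule, which discards only those singular directions of $E^{\top}F$ whose projections onto $\boldsymbol{C}(E^{\top}G)$ are near-orthogonal (equivalently, directions with small $\Sigma_{F^{\top}G}$ contribution, cf.\ Eq.~\ref{11}); this guarantees that the truncated components contribute negligibly to $A$ and hence purely subtract from $B$ in the relevant subspace, justifying the monotonicity. Making this orthogonality argument quantitative—probably via an angle bound between $\lambda_{i}$ and $\gamma_{j}$ induced by the identification threshold $\Delta$—is where the technical work concentrates; the rest is essentially bookkeeping on the SVD expansions.
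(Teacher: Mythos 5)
Your route is essentially the paper's: expand the projection operator before and after sparsification in the SVD bases $\Xi,\Lambda,\Gamma$ (the paper's Eqs.~\ref{13}--\ref{14}), observe that truncation to rank $\vartheta$ touches only the ID sum while the OOD sum over all $m$ terms is unchanged, form the model responses and take the ratio. Two remarks on where you and the paper diverge in detail. First, the paper does not work with the squared risk $E_X\|X\mathcal{W}-Y\|_2^2$ inside the proof; despite calling $\mathcal{R}$ the empirical risk in the lemma statement, it defines the linear response $\mathcal{R}(X)=ErX$ (Eq.~\ref{15}), which is why the linear sums in Eq.~\ref{12} appear directly without any squaring or cross-term bookkeeping --- your appeal to Proposition~\ref{proposition3} to ``absorb cross terms'' is doing work the paper sidesteps by this choice of $\mathcal{R}$. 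Second, the obstacle you flag in your third step --- that $|A-B|\ge|A-B'|$ does not follow from $|B|\ge|B'|$ alone, since the truncated ID directions could in principle have been cancelling part of the OOD term --- is a genuine issue, but it is not one the paper resolves either: the printed proof stops after writing $Er$, $Er^{sparse}$ and $\mathcal{R}(X)$, and the $\ge 1$ bound is simply asserted from $\vartheta\le m$ together with the dominance $p_i\gg p_o$. So your proposal is faithful to the paper's argument and is, if anything, more explicit about the missing monotonicity/orthogonality step (e.g.\ the near-orthogonality of the pruned $\lambda_i$-directions to $\boldsymbol{C}(E^\top G)$) that would be needed to make the final inequality rigorous.
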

% \noindent 
% And we will illustrate the setting of $\vartheta$ in the next section. \\
%, the targeted sparseness of the model can effectively reduce overfitting on ID features while retaining the learning effect on invariant features.

\begin{proof}[Proof of \textbf{Lemma.\ref{lemma1}}]
% \textbf{Proof Sketch of Lemma.\ref{lemma1}}
As mentioned before, the projection space before model sparse could be represented as:
    \begin{equation}
\setlength{\abovedisplayskip}{1pt}
\setlength{\belowdisplayskip}{1pt}
        \begin{aligned}
        Er 
        %p_i E E^\top F a^\top + p_o EE^\top G b^\top\\
        %&=p_i \Xi \Sigma_{E^\top F} \Lambda^{-1} + p_o \xi \Sigma_{E^\top G} \Gamma^{-1}\\
        = \sum_{i=1}^{m}\left ( p_i \sigma_i \xi_i \lambda_i ^\top + p_o \tilde{\sigma_i} \xi_i \gamma_i ^\top \right ) 
        \end{aligned}
    \label{13}
    \end{equation}
Specifically, SFP first perform singular value decomposition on the feature projections which maps input data to a set of coordinates based on the orthonormal basis of model space.
The matrices of left and right singular vectors correspond to the standard orthonormal basis of the model space and data space, respectively. The matrix of singular values correspond to the direction weight of the action vectors in the projection matrix. SFP prunes the model by trimming the smallest singular values in $\Sigma$ as well as their corresponding left and right singular vectors. 
In this way, SFP could remove the spurious features in ID data space and substructures in the model space simultaneously in a targeted manner along the directions with weaker actions for projection. 
and the projection space after sparse with only the most important $\vartheta$ singular values can be formalized as:
    \begin{equation}
        \begin{aligned}
        Er^{sparse} &= p_i \Xi \Sigma_{E^\top F} \Lambda^{-1} + p_o \xi \Sigma_{E^\top G} \Gamma^{-1}\\
        &= \sum_{i=1}^{\vartheta} p_i \sigma_i \xi_i \lambda_i ^\top + \sum_{j=1}^{m} p_o \tilde{\sigma_j} \xi_j \gamma_j ^\top. 
        \end{aligned}
    \label{14}
    \end{equation}
Based on the representation of the projection spaces, the model response to data features $\mathcal{R}(X)=ErX$ can be calculated as:
\begin{equation}
        \begin{aligned}
        \mathcal{R}(X) &= \left\{ p_i \Xi \Sigma_{E^\top F} \Lambda^{-1} + p_o \xi \Sigma_{E^\top G} \Gamma^{-1}\right\}^\top X^\top\\
        &= \sum_{i=1}^{m} \left\{p_i \sigma_i \xi_i \lambda_i ^\top X^\top + p_o \tilde{\sigma_i} \xi_i \gamma_i ^\top X^\top\right\} 
        \end{aligned}
    \label{15}
    \end{equation}
\end{proof}
\subsubsection{Discussion:} 
Model sparsity for ID features is achieved by a low-rank approximation of $\vartheta$-order to the decomposition of ID feature projections. As shown in Lemma.\ref{lemma1}, after ID features-targeted model sparsity, the model performance derivation between ID and OOD data distributions will also be reduced, which is related to the difference in the proportions of different data distributions. Furthermore, since $p_i\gg p_o$, the response of the left singular vector (model with undirected learning) to the ID data is larger than the model's response to the OOD data by a scale of $\frac{p_i}{p_o}$. However, after the effective model sparsity by SFP, the ratio of model's response to ID and OOD data can be reduced to $\frac{\vartheta p_i}{m  p_o}$.
The setting of $\vartheta$ will be discussed later.

\subsection{Correspondence between Model Substructure and Spurious Features}{\label{sec:3.3}}
In this section, we will give out the optimization target of SFP, and then inlustrate the setting of the model sparsity coefficient. We theoretically demonstrate that: with a reasonable setting of the sparse penalty for ID data, SFP can effectively reduce the overfitting of the model on ID features while retaining the learning on invariant features. \par
Specifically, for a training sample $x$ which is identified by SFP as spurious features-dominant ID data, we define $f(x)$ as the last feature maps output by the model and $f(x)$ is also the projection of $x$ into the model space to be learned defined on the spanning set $E$. We use $x\sim F$ to represent $x\in ID$ since $F$ is the basis of the rowspace spanning ID training samples. And for the reason, we use $x\sim G$ to represent $x\in OOD$. We simply use $E^\top F$, $E^\top G$ to denote the projection of the input features in model space.
Thus, the optimization target of SFP can be formulated as follows: 
\begin{equation}   
\begin{aligned}
\min_{E}\mathcal{L}=\mathcal{L}_{ce}+ \eta E_{x\sim F}f(x), 
\end{aligned} 
\label{16}
\end{equation}
where $\eta$ is the sparsity factor imposed on the feature projections for the identified ID data. With that, SFP could adjusts model structures by adaptively recalibrating the channel-wise feature responses of the ID data at a rate $\eta$.
\begin{lemma}
Define $e= |f^*(x)-f(x)|$ as the difference between true feature maps $f^*(x)$ and $f(x)$. When $\eta < 2e$, SFP could effectively reduce the learning of the model to spurious features but keep the performance on the same features.
\label{lemma2}
\end{lemma}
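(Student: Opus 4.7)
The plan is to examine the one-step gradient induced by the augmented objective in Eq.~\ref{16} and show that the threshold $\eta<2e$ acts as a directional gate: along directions where the model has not yet converged the cross-entropy term continues to pull $f(x)$ towards the ground truth $f^{*}(x)$, while along directions where $f(x)\approx f^{*}(x)$ the sparsity term $\eta\,E_{x\sim F}f(x)$ dominates. I will work with the quadratic surrogate $\mathcal{L}_{ce}\approx\|f(x)-f^{*}(x)\|_{2}^{2}$ already used in Eq.~\ref{2}, so that all gradient magnitudes can be read off in closed form.

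First I would differentiate $\mathcal{L}$ with respect to $f(x)$. The cross-entropy contribution is $2\bigl(f(x)-f^{*}(x)\bigr)$, pointing opposite to $f^{*}(x)-f(x)$ with magnitude $2e$, while the penalty contributes a constant push of size $\eta$ towards zero (the relevant regime, since $f(x)$ denotes non-negative post-activation feature maps). The composite update therefore keeps moving $f(x)$ in the correct learning direction exactly when $2(f^{*}(x)-f(x))>\eta$, that is, $\eta<2e$; in the complementary regime (components where $e$ is already tiny) the penalty dominates and drives $f(x)$ towards the origin. I would then lift this one-dimensional picture to all singular directions of $E^{\top}F$ used in Lemma~\ref{lemma1}: for each basis vector $\lambda_{i}$ the residual takes a direction-dependent value $e_{i}$, and by Proposition~\ref{proposition3} undirected training on a highly biased ID distribution has already driven the spurious-feature singular values $\sigma_{i}$ close to their over-fitted ID values, so $e_{i}$ is small there and the penalty wins; along invariant-feature directions $e_{i}$ remains bounded below by the gap $\epsilon$ of Eq.~\ref{7}, so the condition $\eta<2e_{i}$ keeps the cross-entropy gradient dominant and invariant-feature learning proceeds unimpeded. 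Combining the two regimes yields exactly the claim of the lemma.

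The main obstacle I anticipate is turning the informal dichotomy ``$e_{i}$ is small on spurious directions, large on invariant directions'' into a quantitative statement that justifies a single choice of $\eta$. I would attempt to close this gap by combining the singular-value dynamics implied by Eq.~\ref{5} with Proposition~\ref{proposition3} to obtain an explicit expression for $e_{i}$ in terms of $p_{i}$, $p_{o}$, and $\sigma_{F^{\top}G}$; the threshold $\eta<2e$ then becomes a simultaneous lower bound on the smallest invariant residual and an upper bound on the largest spurious residual, whose compatibility is governed by the strict positivity of Eq.~\ref{7}.
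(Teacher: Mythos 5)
There is a genuine gap, and it lies exactly where you flag it yourself: your argument hinges on a direction-wise dichotomy of residuals ($e_i$ small along spurious directions, bounded below along invariant directions), which you never establish and which is not the mechanism the paper uses. In the paper's proof the residual $e=|f^*(x)-f(x)|$ is a single scalar; the gradient of the projection loss along any ID-feature direction is $-2e\lambda_{j_1}$ and along any OOD-feature direction is $-2e\gamma_{j_2}$, with no claim that $e$ is smaller on spurious directions. The separation instead comes from a population argument: the ID features are split into spurious components $F'$ and invariant components $IN$, the OOD features into environment components $G'$ and the same $IN$, with $F'\perp G'$, and the penalty $\eta\,E_{x\sim F}f(x)$ is applied only to identified ID samples, hence enters with weight $p_i$. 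Directions excited only by spurious features then receive an aggregated update $2ep_i\lambda_{F'}-\eta p_i\lambda_{F'}$, which is cancelled when $\eta\approx 2e$, whereas invariant directions receive $2ep_i\lambda_{IN}+2ep_o\gamma_{IN}-\eta p_i\lambda_{IN}$, which retains the un-penalized OOD contribution; requiring this surplus to dominate the environment-feature learning $2ep_o\gamma_{G'}$ gives
\begin{equation*}
\eta \le \frac{2ep_i\lambda_{IN}+2ep_o\gamma_{IN}-2ep_o\gamma_{G'}}{p_i\lambda_{IN}}\approx 2e,
\end{equation*}
which is the bound in the lemma. Your convergence-based gate cannot reproduce this: early in training the residual along spurious directions is not small (the model has not yet overfitted them), so your criterion would let spurious learning proceed unchecked precisely when the penalty is supposed to act, and a single scalar threshold $2e$ does not cleanly separate your two regimes.

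The bridge you propose to close the gap also does not hold as stated. The $\epsilon$ in Eq.~\ref{7} is the initialization error in the ID-versus-OOD loss difference, not a lower bound on residuals along invariant-feature directions, and Proposition~\ref{proposition3} compares losses across the two sample populations rather than across spurious versus invariant directions within the feature decomposition. If you want to keep your route, you would need a separate, quantitative result on per-direction residual dynamics (e.g., from the singular-value evolution in Eq.~\ref{5}) showing the two regimes are simultaneously compatible with one $\eta$; the paper avoids this entirely by exploiting that the penalty weight $p_i$ attaches only to ID samples while invariant directions are also driven by the un-penalized OOD fraction $p_o$.
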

\begin{proof}[Proof of \textbf{Lemma.\ref{lemma2}:} ]
The prediction errors of feature projections $L_{f}$ can be defined as:
\begin{equation}
    \begin{aligned}
L_{f} &= |f^*(x)-f_(x)|^2\\
&=\sum_{i,j=j_1\cup j_2}(f^*(x)
-\sigma_{i,j_1}\xi_{i}\top\lambda_{j_1} -\sigma_{i,j_2}\xi_{i}\top\gamma_{j_2})^2,
% \frac{\partial L_{ce}}{\partial \sigma_{i,j_1}\xi_{i} } 
% &=\frac{\partial e^2}{\partial \sigma_{i,j}\xi_{i}}=2e\frac{\partial e}{\partial \sigma_{i,j}\xi_{i}}\\
% &=2e \frac{\left|f^*(x)
% -\sigma_{i,j_1}\xi_{i}\top\lambda_{j_1} -\sigma_{i,j_2}\xi_{i}\top\gamma_{j_2})\right |}{\partial \sigma_{i,j}\xi_{i}}\\
% &=-2e\lambda_{j_1}
\end{aligned}
\label{17}
\end{equation}
and the corresponding gradient is:
\begin{equation}
    \begin{aligned}
% L_{ce} &= |f^*(x)-f_(x)|^2\\
% &=\sum_{i,j=j_1\cup j_2}(f^*(x)
% -\sigma_{i,j_1}\xi_{i}\top\lambda_{j_1} -\sigma_{i,j_2}\xi_{i}\top\gamma_{j_2})^2\\
\frac{\partial L_{f}}{\partial \sigma_{i,j_1}\xi_{i} } 
&=\frac{\partial e^2}{\partial \sigma_{i,j}\xi_{i}}=2e\frac{\partial e}{\partial \sigma_{i,j}\xi_{i}}\\
&=2e \frac{\left|f^*(x)
-\sigma_{i,j_1}\xi_{i}\top\lambda_{j_1} -\sigma_{i,j_2}\xi_{i}\top\gamma_{j_2})\right |}{\partial \sigma_{i,j}\xi_{i}}\\
&=-2e\lambda_{j_1},
\end{aligned}
\label{18}
\end{equation}
where $i$, $j$ is the index of column vectors in the orthogonal basis for model space and feature space, respectively.
% Similarly, f
For OOD data, the gradient of the column vectors in the OOD projection matrix interacting with the $j_{th}$ feature vector is $-2e\gamma_{j_2}$.

\noindent Then, splitting the in-domain features into the spurious features $F'$ and the invariant features $IN$, and splitting the out-of-domain features into the unknown features $G'$ and the invariant features $IN$. 
Since the environment features in-domain and out-domain are different with high probability under OOD setting, we suppose $F'$ and $G'$ are orthogonal. 
To achieve spurious features-targeted unlearning and invariant features-targeted learning of the model, we need to satisfy the following constraint:
\begin{equation}
\begin{aligned}
    &2ep_i\lambda_{IN}+2ep_o\gamma_{IN}-p_i\eta \lambda_{IN}>2ep_o\gamma_{G'}\\
    &\Rightarrow~\eta \le \frac{2ep_i\lambda_{IN}+2ep_o\gamma_{IN}-2ep_o\gamma_{G'}}{p_i \lambda_{IN}}\approx 2e.
\end{aligned}
\label{19}
\end{equation}
\end{proof}
Since the de-learning rate of the spurious feature is positively correlated with $\eta$, the upper bound $\eta=2e$ is taken in this work.
\bigskip
\setlength{\belowdisplayskip}{3pt}
\section{Experiments}{\label{sec:4}}
In this section, we conduct extensive experiments on various datasets to demonstrate the effectiveness and superiority of the proposed SFP. We first compare the average loss of ID and OOD samples in the training process to prove Lemma.~{SFP} from an experimental perspective. Then, we compare the OOD generalization performance with other state-of-the-art methods.  
\subsection{Experiment Setting}% In the first experiment, we utilize the Colored-MNIST dataset and set the ratio of 
% \subsubsection{Datasets.}
% In the performance comparison experiment, we follow the experiment setting in \cite{can} and 
We evaluate SFP over three OOD datasets: Full-colored-mnist, Colored-object, and Scene-object. As shown in Fig.~\ref{fig: ood dataset}, in these datasets, the invariant feature is the focused digit or object in the foreground, and the spurious feature is the background scene. In the full-colored-mnist and colored-object dataset, we generate ten pure colored backgrounds with different colors as the spurious feature. 
In the scene-object dataset, 
% we extract ten real-world scenes from MSCOCOP dataset \cite{lin2014microsoft} as the background. 
we extract ten real-world scenes from PLACE365 dataset \cite{zhou2017places} as the background. 
Besides, the ten kinds of objects in colored-object and scene-object are also extrated from MSCOCO dataset \cite{lin2014microsoft}.
For all datasets, we design the biased data samples with a one-to-one object-scenery relationship, e.g., in the full-colored-mnist, the biased sample of digit $1$ always has a pure red background, and the unbiased sample has a background with a randomly assigned color. 
The former kind of sample is considered an ID sample since it contains spurious features, while the latter kind is considered an OOD sample in our experiment.

\begin{figure}[t]
\centering
\setlength{\abovecaptionskip}{1pt}
\begin{subfigure}[t]{0.15\textwidth} 
    \includegraphics[width=1in]{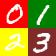}
    \caption{full-colored-mnist}
\end{subfigure}
\begin{subfigure}[t]{0.15\textwidth}
    \includegraphics[width=1in]{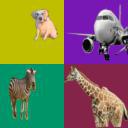}
    \caption{colored-object}
\end{subfigure}
\begin{subfigure}[t]{0.15\textwidth}
    \includegraphics[width=1in]{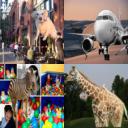}
    \caption{scene-object}
\end{subfigure}
\caption{Visualization of OOD datasets}
\label{fig: ood dataset}
\end{figure}

\begin{figure}[t]
\label{performance result}
\centering
\setlength{\abovecaptionskip}{1pt}
\includegraphics[width=0.35\textwidth]{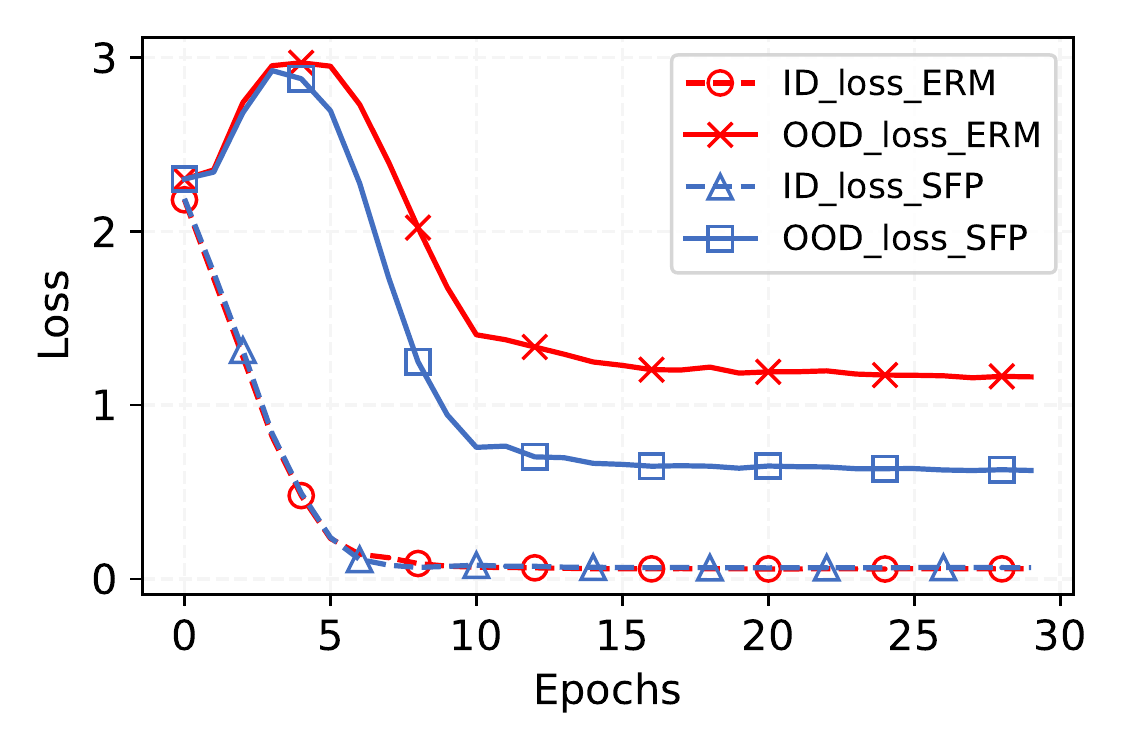}
\caption{Training loss of ID and OOD samples}
\label{fig: loss curve}
\end{figure}

\begin{table}[t]
\centering
% \small
\setlength{\abovecaptionskip}{2pt}
\caption{Performance comparison of SFP and OOD generalized SOTAs on full-colored-mnist.}
\label{table1}
\begin{tabular}{c|cc}
% \hline
\toprule
% \multirow{2}{*}{Method} & \multicolumn{2}{c}{colored-mnist} 
Method & Train Accuracy(\%)       & Test Accuracy(\%)   \\ 
\midrule\midrule
% \hline
ERM                     & $93.96$        & $62.20$          \\
MRM                     & $96.71$        & $80.95$         \\
SFP                     & $97.48$        & $84.29$        \\ 
\midrule
% \hline
IRM                     & $96.25$        & $77.96$         \\
MODIRM                  & $98.11$        & $89.32$        \\
SFPIRM                  & $98.35$        & $89.93$        \\ 
\midrule
% \hline
REX                     & $97.44$        & $87.80$         \\
MODREX                  & $98.39$        & $92.19$        \\
SFPREX                  & $98.61$        & $93.42$        \\ 
\midrule
% \hline
DRO                     & $94.05$        & $62.89$        \\
MODDRO                  & $96.73$        & $80.52$       \\
SFPDRO                  & $97.56$        & $85.24$         \\ 
\midrule
% \hline
UNBIASED               & $93.36$        & $94.04$            \\ 
\bottomrule
% \hline
\end{tabular}
\end{table}

\begin{table}[h]
\centering
% \small
\setlength{\abovecaptionskip}{2pt}
\caption{Performance comparison of SFP and OOD generalized SOTAs on colored-object.}
\label{table2}
\begin{tabular}{c|cc}
% \hline
\toprule
% \multirow{2}{*}{Method} & \multicolumn{2}{c}{colored-object} \\ 
% \cmidrule{2-3} 
Method & Train Accuracy(\%)      & Test Accuracy(\%)       \\ 
\midrule\midrule
% \hline
ERM                        & $99.99$        & $59.19$       \\
MRM                       & $99.99$        & $60.65$        \\
SFP                     & $100$          & $61.01$         \\ 
\midrule
% \hline
IRM                        & $99.99$        & $62.88$      \\
MODIRM                   & $99.98$        & $64.52$         \\
SFPIRM                  & $100$          & $65.8$       \\ 
\midrule
% \hline
REX                    & $100$          & $64.72$      \\
MODREX                    & $100$          & $64.52$    \\
SFPREX                      & $100$          & $66.08$    \\ 
\midrule
% \hline
DRO                      & $100$          & $66.76$      \\
MODDRO                   & $100$          & $66.18$       \\
SFPDRO                   & $100$          & $68.44$       \\ 
\midrule
% \hline
UNBIASED                & $99.98$        & $75.78$      \\ 
\bottomrule
% \hline
\end{tabular}
\end{table}

\begin{table}[h]
\centering
% \small
\setlength{\abovecaptionskip}{2pt}
\caption{Performance comparison of SFP and OOD generalized SOTAs on scene-object.}
\label{table3}
\begin{tabular}{c|cc}
% \hline
\toprule
% \multirow{2}{*}{Method} & \multicolumn{2}{c}{colored-mnist} & \multicolumn{2}{c}{colored-object} & \multicolumn{2}{c}{scene-object} \\ 
% \cmidrule{2-3} \cmidrule{4-5} \cmidrule{6-7} 
Method & Train Accuracy(\%)       & Test Accuracy(\%)      \\ 
\midrule\midrule
% \hline
ERM                   & $100$         & $27.4$       \\
MRM                     & $100$         & $26.74$      \\
SFP                      & $100$         & $28.41$      \\ 
\midrule
% \hline
IRM                      & $99.79$       & $36.88$      \\
MODIRM                & $99.78$       & $36.92$      \\
SFPIRM                    & $99.76$       & $38.1$        \\ 
\midrule
% \hline
REX                       & $99.76$       & $36.71$      \\
MODREX                     & $99.82$       & $36.66$     \\
SFPREX                    & $99.68$       & $37.91$      \\ 
\midrule
% \hline
DRO                         & $99.98$       & $31.31$      \\
MODDRO                      & $99.85$       & $29.38$      \\
SFPDRO                      & $99.98$       & $31.78$      \\ 
\midrule
% \hline
UNBIASED                     & $99.85$       & $45.51$        \\ 
\bottomrule
% \hline
\end{tabular}
\end{table}
\subsection{Loss Tracking}
\setlength{\belowdisplayskip}{2pt}
% As shown in \textcolor{red}{Lemma.~X}, the expectation of loss on OOD samples is larger than that on ID samples. 
% Need to refer to the theoretical part.
% We further designed the regularization term in the proposed SFP to sparse the feature of low-loss samples. Our methodology is based on \textcolor{red}{Lemma.~X}, so we designed a loss tracking experiment to demonstrate the correctness of \textcolor{red}{Lemma.~X} in an experimental aspect. 
To verify the efficiency of our proposed regularization term in SFP, we visualize the changes of loss value on ERM and our SFP with the increase of training rounds. 
%
% We train the deep learning model with the ERM method in this experiment in an ID-dominated environment. 
% In the training process, we track the average loss of ID and OOD samples every epoch and draw the curve in Fig.~\ref{fig: loss curve}.
We can see from Fig.~\ref{fig: loss curve} that the loss of the ID samples is always lower than the loss of the OOD samples in the whole training process, which positively verify the result of Proposition.~\ref{proposition3}, i.e., SFP can effectively filter the OOD samples by the task loss.
Besides, We can notice that 
% , which is the red lines in Fig.~\ref{fig: loss curve}, 
the loss of ID samples converges too fast in the ERM method (i.e., the red lines in Fig.~\ref{fig: loss curve}),  while the OOD samples remain large losses. The ERM model pays too much attention to the biased data, so they tend to fit the spurious feature and ignore the invariant feature. 
On the contrary, in SFP, the distance between ID samples' and OOD samples' loss is significantly reduced, so making the ID samples' features sparse is helpful to keep the model paying more attention to the OOD samples. More importantly, as we can see in Fig.~\ref{fig: loss curve}, the regularization term of our SFP neither slow down the convergence speed nor negatively influence the performance of ID samples.

\subsection{OOD Generalization}
% In this section, we compare the ood generalization ability of proposed SFP and other state-of-the-art baselines, which contains three OOD generalization techniques: IRM, REX, DRO, and one structure-learning method MRM. The detail of these baselines are given in Sec.~\ref{ood related work} and Sec.~\ref{sec: pruning related work}. Since the MRM and our proposed SFP are both orthogonal
% %正交这个事情要在 methodology 里面强调
% to the other three baselines, we also use them incorporate other methods to compare the performance promotion. 

In this section, we compare the ood generalization ability of our proposed SFP and other state-of-the-art baselines, which contains three OOD generalization techniques: IRM \cite{arjovsky2019invariant}, REX \cite{krueger2021out}, DRO \cite{sagawa2019distributionally}, and one structure-learning method, MRM \cite{can}. 
% The details of these baselines are given in Sec.~\ref{ood related work} and Sec.~\ref{sec: pruning related work}. 
Since the MRM and our proposed SFP are both orthogonal to the other three baselines, we also integrate SFP into them to 
% incorporate other methods 
to compare performance promotion.

To evaluate the OOD generalization capability, we train the model using two in-domain environments dominated by biased samples and evaluate the performance in an OOD environment. By such means, the upper bound of the OOD generalization performance can be achieved by training the model in an environment with only unbiased samples.
Supposing a biased ratio coefficient $(0.8,0.6,0.0)$ represents the ratio of biased data in two training environments and one testing environment, we set different biased ratio coefficients for different datasets. For full-colored-mnist and colored-object, we set the biased ratio coefficient as $(0.8,0.6,0.0)$. To increase the difficulty in the scene-object, we set the biased ratio as $(0.9,0.7,0.0)$. The Unbiased performance is tested in environment $(0.0,0.0,0.0)$.
The OOD generalization results over three datasets are illustrated from Table~\ref{table1} to Table~\ref{table3}.  We can notice that our proposed SFP can effectively improve the OOD generalization performance in all cases. The most significant case is the performance in full-colored-mnist task cooperates with REX algorithm, who reaches a high accuracy of $93.42\%$. By contrast, the upper bound accuracy of UNBIASED method is $94.04\%$, only surpassing our method $0.62\%$.\par
Even though MRM's promotion is adaptable to other state-of-the-art baselines, the benefit of MRM is unstable. 
% As the difficulty among the three datasets increases, the proposed SFP's superiority becomes more significant. 
In the full-colored-mnist task, MRM and SFP can assist other domain generalization algorithms, while SFP's promotion is higher. However, in more complex tasks such as colored-object and scene-object tasks, MRM sometimes has a negative effect. For example, in the scene-object task, the test accuracy of the DRO algorithm can achieve $31.31\%$ by itself. With the integration of MRM, the performance is dragged down to $29.38\%$, while SFP helps to increase the accuracy to $31.78\%$.  
\section{Conclusion}
In this paper, we propose a novel spurious feature-targeted model sparsity framework, dubbed SFP, to automatically find out the substructures with better OOD generalization performance that the original model. 
By effectively identifying spurious features during training via a theoretically verified threshold, SFP can perform targeted-model pruning that removes model branches only with strong dependencies on spurious features. Thus, SFP attenuates the projections of spurious features into model space and pushes the model learning toward invariant features, devising the substructure with the optimal OOD generalization. 
We also conduct detailed theoretical analysis to provide the rationality guarantee and a proof framework for OOD structures via model sparsity.
To the best of our knowledge, this is the first work to theoretically reveal the correspondence between the biased data features and the model substructures learning with respect to better OOD generalization.
\section{here}
\begin{algorithm}[tb] 
\caption{ The training pipeline of the proposed spurious feature-
targeted model pruning. } 
\label{alg1:Framwork} 
\begin{algorithmic}[1] %这个1 表示每一行都显示数字
\REQUIRE ~~\\ %算法的输入参数：Input
Training dataset $X=X_{id}\cup X_{ood}$, wherein $X_{in}$ and $X_{ood}$ are the in-distribution and out-of-distribution dataset respectively;\\
$p_i$ and $p_o$ are the proportion of $X_{id}$
and $X_{ood}$ in the training dataset respectively, and $p_i \gg p_o$;\\
Unpruned model $M$ with parameters set $\mathcal{W}$;\\
Average, maximum, minimum of losses in the $t$-th iteration $\overline{\mathcal{L}^t}$, $\mathcal{L}_{max}^t$ and $\mathcal{L}_{min}^t$;
\ENSURE ~~\\ %算法的输出：Output
The pruned sub-model for out-of-distribution generalization;
\STATE Initialize $t=0$, $\mathcal{W}$;\\
\STATE Randomly draw a batch of samples from $X$;
\STATE $\overline{\mathcal{L}^t}$ $\gets$ $t$-th model forward propagation;
\STATE \textbf{repeat}\\
\STATE ~~~~$t = t+1$;
\STATE ~~~~$\mathcal{L}_{\max}^t$, $\mathcal{L}_{\min}^t$, $\overline{\mathcal{L}^t}$ $\gets$ $t$-th forward pass;
\STATE ~~~~$\sigma_{F^\top G}^t$ $\gets$ $\max(\Sigma_{F^\top G}) = 1-\frac{\mathcal{L}_{max}-\mathcal{L}_{min}}{p_i^2-p_o^2}$ (EQ.~\ref{7});
\STATE ~~~~Threshold $\Delta=p_o(p_i-p_o)(1-\sigma_{F^\top G}^t)$\\
\STATE ~~~~\textbf{for} each sample $(x,y)$:
\STATE ~~~~\quad \textbf{if} $\mathcal{L}(\mathcal{W},x,y)^t- \mathcal{L}^{t-1} < \Delta$ \textbf{then}\\
\STATE ~~~~\quad\quad Add punish $\mathcal{L}^t = \mathcal{L}^t + f(x)$;\\
\STATE~~~~\quad\quad Update model $\mathcal{W} \gets$ $t$-th backward pass;\\
\STATE ~~~~\quad \textbf{end}
\STATE ~~~~\textbf{end}
\STATE \textbf{until} convergence of $\mathcal{W}$
\end{algorithmic}
\end{algorithm}

\bibliography{aaai23}

\begin{thebibliography}{30}
\providecommand{\natexlab}[1]{#1}

\bibitem[{Arjovsky et~al.(2019{\natexlab{a}})Arjovsky, Bottou, Gulrajani, and
  Lopez-Paz}]{iv1}
Arjovsky, M.; Bottou, L.; Gulrajani, I.; and Lopez-Paz, D. 2019{\natexlab{a}}.
\newblock Invariant risk minimization.
\newblock \emph{arXiv preprint arXiv:1907.02893}.

\bibitem[{Arjovsky et~al.(2019{\natexlab{b}})Arjovsky, Bottou, Gulrajani, and
  Lopez-Paz}]{arjovsky2019invariant}
Arjovsky, M.; Bottou, L.; Gulrajani, I.; and Lopez-Paz, D. 2019{\natexlab{b}}.
\newblock Invariant risk minimization.
\newblock \emph{arXiv preprint arXiv:1907.02893}.

\bibitem[{Bengio, Courville, and Vincent(2013)}]{DBLP:journals/pami/BengioCV13}
Bengio, Y.; Courville, A.~C.; and Vincent, P. 2013.
\newblock Representation Learning: {A} Review and New Perspectives.
\newblock \emph{{IEEE} Trans. Pattern Anal. Mach. Intell.}, 35(8): 1798--1828.

\bibitem[{Bertsimas, Gupta, and Kallus(2018)}]{DBLP:journals/mp/BertsimasGK18}
Bertsimas, D.; Gupta, V.; and Kallus, N. 2018.
\newblock Data-driven robust optimization.
\newblock \emph{Math. Program.}, 167(2): 235--292.

\bibitem[{Creager, Jacobsen, and Zemel(2021)}]{iv2}
Creager, E.; Jacobsen, J.-H.; and Zemel, R. 2021.
\newblock Environment inference for invariant learning.
\newblock In \emph{International Conference on Machine Learning}, 2189--2200.
  PMLR.

\bibitem[{Du et~al.(2021)Du, Zhen, Shao, and Snoek}]{DBLP:conf/iclr/DuZ0S21}
Du, Y.; Zhen, X.; Shao, L.; and Snoek, C. G.~M. 2021.
\newblock MetaNorm: Learning to Normalize Few-Shot Batches Across Domains.
\newblock In \emph{{ICLR}}. OpenReview.net.

\bibitem[{Han et~al.(2015)Han, Pool, Tran, and Dally}]{DBLP:conf/nips/HanPTD15}
Han, S.; Pool, J.; Tran, J.; and Dally, W.~J. 2015.
\newblock Learning both Weights and Connections for Efficient Neural Network.
\newblock In \emph{{NIPS}}, 1135--1143.

\bibitem[{He et~al.(2018)He, Kang, Dong, Fu, and
  Yang}]{DBLP:conf/ijcai/HeKDFY18}
He, Y.; Kang, G.; Dong, X.; Fu, Y.; and Yang, Y. 2018.
\newblock Soft Filter Pruning for Accelerating Deep Convolutional Neural
  Networks.
\newblock In \emph{{IJCAI}}, 2234--2240. ijcai.org.

\bibitem[{Hu et~al.(2020)Hu, Zhang, Chen, and Chan}]{dg1}
Hu, S.; Zhang, K.; Chen, Z.; and Chan, L. 2020.
\newblock Domain generalization via multidomain discriminant analysis.
\newblock In \emph{Uncertainty in Artificial Intelligence}, 292--302. PMLR.

\bibitem[{Jordan and Mitchell(2015)}]{jordan2015machine}
Jordan, M.~I.; and Mitchell, T.~M. 2015.
\newblock Machine learning: Trends, perspectives, and prospects.
\newblock \emph{Science}, 349(6245): 255--260.

\bibitem[{Krueger et~al.(2021)Krueger, Caballero, Jacobsen, Zhang, Binas,
  Zhang, Le~Priol, and Courville}]{krueger2021out}
Krueger, D.; Caballero, E.; Jacobsen, J.-H.; Zhang, A.; Binas, J.; Zhang, D.;
  Le~Priol, R.; and Courville, A. 2021.
\newblock Out-of-distribution generalization via risk extrapolation (rex).
\newblock In \emph{International Conference on Machine Learning}, 5815--5826.
  PMLR.

\bibitem[{Kuang et~al.(2020)Kuang, Xiong, Cui, Athey, and Li}]{st2}
Kuang, K.; Xiong, R.; Cui, P.; Athey, S.; and Li, B. 2020.
\newblock Stable prediction with model misspecification and agnostic
  distribution shift.
\newblock In \emph{Proceedings of the AAAI Conference on Artificial
  Intelligence}, volume~34, 4485--4492.

\bibitem[{LeCun, Denker, and Solla(1989)}]{DBLP:conf/nips/CunDS89}
LeCun, Y.; Denker, J.~S.; and Solla, S.~A. 1989.
\newblock Optimal Brain Damage.
\newblock In \emph{{NIPS}}, 598--605. Morgan Kaufmann.

\bibitem[{Li et~al.(2017)Li, Kadav, Durdanovic, Samet, and
  Graf}]{DBLP:conf/iclr/0022KDSG17}
Li, H.; Kadav, A.; Durdanovic, I.; Samet, H.; and Graf, H.~P. 2017.
\newblock Pruning Filters for Efficient ConvNets.
\newblock In \emph{{ICLR} (Poster)}. OpenReview.net.

\bibitem[{Lin et~al.(2014)Lin, Maire, Belongie, Hays, Perona, Ramanan,
  Doll{\'a}r, and Zitnick}]{lin2014microsoft}
Lin, T.-Y.; Maire, M.; Belongie, S.; Hays, J.; Perona, P.; Ramanan, D.;
  Doll{\'a}r, P.; and Zitnick, C.~L. 2014.
\newblock Microsoft coco: Common objects in context.
\newblock In \emph{European conference on computer vision}, 740--755. Springer.

\bibitem[{Liu et~al.(2021)Liu, Hu, Cui, Li, and
  Shen}]{DBLP:conf/icml/LiuH00S21}
Liu, J.; Hu, Z.; Cui, P.; Li, B.; and Shen, Z. 2021.
\newblock Heterogeneous Risk Minimization.
\newblock In \emph{{ICML}}, volume 139 of \emph{Proceedings of Machine Learning
  Research}, 6804--6814. {PMLR}.

\bibitem[{Locatello et~al.(2019)Locatello, Bauer, Lucic, R{\"{a}}tsch, Gelly,
  Sch{\"{o}}lkopf, and Bachem}]{DBLP:conf/iclr/LocatelloBLRGSB19}
Locatello, F.; Bauer, S.; Lucic, M.; R{\"{a}}tsch, G.; Gelly, S.;
  Sch{\"{o}}lkopf, B.; and Bachem, O. 2019.
\newblock Challenging Common Assumptions in the Unsupervised Learning of
  Disentangled Representations.
\newblock In \emph{RML@ICLR}. OpenReview.net.

\bibitem[{Peters, B{\"u}hlmann, and Meinshausen(2016)}]{cl1}
Peters, J.; B{\"u}hlmann, P.; and Meinshausen, N. 2016.
\newblock Causal inference by using invariant prediction: identification and
  confidence intervals.
\newblock \emph{Journal of the Royal Statistical Society: Series B (Statistical
  Methodology)}, 78(5): 947--1012.

\bibitem[{Sagawa et~al.(2019)Sagawa, Koh, Hashimoto, and
  Liang}]{sagawa2019distributionally}
Sagawa, S.; Koh, P.~W.; Hashimoto, T.~B.; and Liang, P. 2019.
\newblock Distributionally robust neural networks for group shifts: On the
  importance of regularization for worst-case generalization.
\newblock \emph{arXiv preprint arXiv:1911.08731}.

\bibitem[{Sagawa et~al.(2020)Sagawa, Raghunathan, Koh, and Liang}]{ar1}
Sagawa, S.; Raghunathan, A.; Koh, P.~W.; and Liang, P. 2020.
\newblock An investigation of why overparameterization exacerbates spurious
  correlations.
\newblock In \emph{International Conference on Machine Learning}, 8346--8356.
  PMLR.

\bibitem[{Sch{\"{o}}lkopf et~al.(2021)Sch{\"{o}}lkopf, Locatello, Bauer, Ke,
  Kalchbrenner, Goyal, and Bengio}]{DBLP:journals/pieee/ScholkopfLBKKGB21}
Sch{\"{o}}lkopf, B.; Locatello, F.; Bauer, S.; Ke, N.~R.; Kalchbrenner, N.;
  Goyal, A.; and Bengio, Y. 2021.
\newblock Toward Causal Representation Learning.
\newblock \emph{Proc. {IEEE}}, 109(5): 612--634.

\bibitem[{Shen et~al.(2020{\natexlab{a}})Shen, Liu, Dong, Lian, Chen, and
  Zhang}]{DBLP:journals/corr/abs-2010-02637}
Shen, X.; Liu, F.; Dong, H.; Lian, Q.; Chen, Z.; and Zhang, T.
  2020{\natexlab{a}}.
\newblock Disentangled Generative Causal Representation Learning.
\newblock \emph{CoRR}, abs/2010.02637.

\bibitem[{Shen et~al.(2020{\natexlab{b}})Shen, Cui, Zhang, and Kunag}]{st1}
Shen, Z.; Cui, P.; Zhang, T.; and Kunag, K. 2020{\natexlab{b}}.
\newblock Stable learning via sample reweighting.
\newblock In \emph{Proceedings of the AAAI Conference on Artificial
  Intelligence}, volume~34, 5692--5699.

\bibitem[{Shen et~al.(2021)Shen, Liu, He, Zhang, Xu, Yu, and Cui}]{survey}
Shen, Z.; Liu, J.; He, Y.; Zhang, X.; Xu, R.; Yu, H.; and Cui, P. 2021.
\newblock Towards Out-Of-Distribution Generalization: {A} Survey.
\newblock \emph{CoRR}, abs/2108.13624.

\bibitem[{Wen et~al.(2016)Wen, Wu, Wang, Chen, and
  Li}]{DBLP:conf/nips/WenWWCL16}
Wen, W.; Wu, C.; Wang, Y.; Chen, Y.; and Li, H. 2016.
\newblock Learning Structured Sparsity in Deep Neural Networks.
\newblock In \emph{{NIPS}}, 2074--2082.

\bibitem[{Yang et~al.(2021)Yang, Liu, Chen, Shen, Hao, and
  Wang}]{DBLP:conf/cvpr/YangLCSHW21}
Yang, M.; Liu, F.; Chen, Z.; Shen, X.; Hao, J.; and Wang, J. 2021.
\newblock CausalVAE: Disentangled Representation Learning via Neural Structural
  Causal Models.
\newblock In \emph{{CVPR}}, 9593--9602. Computer Vision Foundation / {IEEE}.

\bibitem[{Zhang et~al.(2021)Zhang, Ahuja, Xu, Wang, and Courville}]{can}
Zhang, D.; Ahuja, K.; Xu, Y.; Wang, Y.; and Courville, A. 2021.
\newblock Can subnetwork structure be the key to out-of-distribution
  generalization?
\newblock In \emph{International Conference on Machine Learning}, 12356--12367.
  PMLR.

\bibitem[{Zhao et~al.(2019)Zhao, Ni, Zhang, Zhao, Zhang, and
  Tian}]{DBLP:conf/cvpr/ZhaoNZZZT19}
Zhao, C.; Ni, B.; Zhang, J.; Zhao, Q.; Zhang, W.; and Tian, Q. 2019.
\newblock Variational Convolutional Neural Network Pruning.
\newblock In \emph{{CVPR}}, 2780--2789. Computer Vision Foundation / {IEEE}.

\bibitem[{Zhou et~al.(2017)Zhou, Lapedriza, Khosla, Oliva, and
  Torralba}]{zhou2017places}
Zhou, B.; Lapedriza, A.; Khosla, A.; Oliva, A.; and Torralba, A. 2017.
\newblock Places: A 10 million image database for scene recognition.
\newblock \emph{IEEE transactions on pattern analysis and machine
  intelligence}, 40(6): 1452--1464.

\bibitem[{Zhou et~al.(2021)Zhou, Yang, Qiao, and Xiang}]{dg2}
Zhou, K.; Yang, Y.; Qiao, Y.; and Xiang, T. 2021.
\newblock Domain adaptive ensemble learning.
\newblock \emph{IEEE Transactions on Image Processing}, 30: 8008--8018.

\end{thebibliography}
\end{document}